\let\NAT@parse\undefined
\def\BibTeX{{\rm B\kern-.05em{\sc i\kern-.025em b}\kern-.08em
    T\kern-.1667em\lower.7ex\hbox{E}\kern-.125emX}}
\definecolor{lightblue}{rgb}{0.68, 0.85, 0.9}
\definecolor{lighterblue}{rgb}{0.80, 0.92, 0.95}
\definecolor{lightergray}{rgb}{0.90, 0.90, 0.90}
\definecolor{tan}{rgb}{0.82, 0.71, 0.55}
\definecolor{lighttan}{rgb}{0.94, 0.87, 0.80}
\newcommand{\Term}[1]{\textsf{#1}}
\newtheorem{lemma}{Lemma}
\newtheorem{claim}{Claim}
\newtheorem{observation}{Observation}
\newtheorem{corollary}{Corollary}
\definecolor{almostblack}{rgb}{0, 0, 0.3}
\providecommand{\emphw}[1]{{\textcolor{almostblack}{\emph{#1}}}}%
\newcommand{\cspace}{\ensuremath{\mathcal{C}_{space}}}
\newcommand{\knn}{\ensuremath{k\mathtt{NN}}}
\newcommand{\nn}{\ensuremath{\mathtt{NN}}\xspace}
\newcommand{\VinfX}[1]{{#1}^{}_{\Z{\textstyle\mathstrut}}}
\newcommand{\VinfwX}[1]{{#1}^{}_{\Z}}
\newcommand{\VpmX}[1]{{#1}^{}_{\pm{\textstyle\mathstrut}}}
\newcommand{\tiling}{\mathcal{G}}%
\newcommand{\tilingX}[1]{\tiling_{#1}}
\newcommand{\cluster}{O_{\pm}}
\newcommand{\clusterX}[1]{#1_{\pm}}
\newcommand{\R}{\mathbb{R}}
\newcommand{\Z}{\mathbb{Z}}
\newcommand{\T}{\mathbb{T}}
\newcommand{\E}[1]{\mathbb{E}\left[#1\right]}
\newcommand{\VDC}{\mathcal{VD}}
\newcommand{\VDY}[2]{\mathcal{VD}_{#1}\pth{#2}}
\newcommand{\VorCellX}[1]{\mathcal{C}\pth{#1}}
\newcommand{\VSphereX}[1]{\mathbb{S}_{#1}}%
\newcommand{\VBallX}[1]{\mathbb{B}^{}_{\!#1}}%
\newcommand{\BallX}[1]{B_{#1}}
\newcommand{\VolX}[1]{\mathcal{V}\pth{#1}}%
\newcommand{\lenX}[1]{\left\| #1 \right\|}
\DeclareMathOperator*{\argmin}{argmin}
\newcommand{\eps}{\varepsilon}
\newcommand{\dX}[1]{\mathop{\mathrm{d}}\!#1}
\newcommand{\pth}[2][\!]{\mleft({#2}\mright)}%
\newcommand{\pbrcx}[1]{\mleft[ {#1} \mright]}%
\newcommand{\ExChar}{\mathbb{E}}%
\newcommand{\Ex}[1]{\mathop{\ExChar}\pbrcx{#1}}
\providecommand{\IntRange}[1]{\mleft\llbracket #1 \mright\rrbracket}
\newcommand{\IRX}[1]{\IntRange{#1}}%
\newcommand{\SSSP}{\Term{SSSP}\xspace}
\newcommand{\DOF}{\Term{DOF}\xspace}
\newcommand{\DOFs}{\DOF{}s\xspace}
\newcommand{\PPL}{\Term{PPL}\xspace}
\newcommand{\RRT}{\Term{RRT}\xspace}
\newcommand{\RRG}{\Term{RRG}\xspace}
\newcommand{\PRM}{\Term{PRM}\xspace}
\newcommand{\SBMP}{\Term{SBMP}\xspace}
\newcommand{\AABB}{\Term{AABB}\xspace}
\newcommand{\cobweb}{cobweb-\Term{RRG}\xspace}
\renewcommand{\th}{th\xspace}
\newcommand{\etal}{\textit{et~al.}\xspace}
\newcommand{\si}[1]{#1}
\newcommand{\HLink}[2]{\hyperref[#2]{#1~\ref*{#2}}}
\newcommand{\HLinkSuffix}[3]{\hyperref[#2]{#1\ref*{#2}{#3}}}
\newcommand{\alglab}[1]{\label{Algorithm:#1}}%
\providecommand{\algref}[1]{}%
\renewcommand{\algref}[1]{\HLink{Algorithm}{Algorithm:#1}}%
\newcommand{\figlab}[1]{\label{fig:#1}}
\newcommand{\figref}[1]{\HLink{Fig.}{fig:#1}}
\newcommand{\seclab}[1]{\label{sec:#1}}
\newcommand{\secref}[1]{\HLink{Section}{sec:#1}}
\newcommand{\clmlab}[1]{\label{clm:#1}}
\newcommand{\clmref}[1]{\HLink{Claim}{clm:#1}}
\newcommand{\obslab}[1]{\label{observation:#1}}
\newcommand{\obsref}[1]{\HLink{Observation}{observation:#1}}
\newcommand{\corlab}[1]{\label{cor:#1}}
\newcommand{\corref}[1]{\HLink{Corollary}{cor:#1}}%
\newcommand{\lemlab}[1]{\label{lem:#1}}
\newcommand{\lemref}[1]{\HLink{Lemma}{lem:#1}}
\newcommand{\linelab}[1]{\label{line:#1}}
\newcommand{\lineref}[1]{\HLink{line}{line:#1}}
\providecommand{\eqlab}[1]{}%
\renewcommand{\eqlab}[1]{\label{equation:#1}}
\providecommand{\eqref}[1]{}%
\renewcommand{\eqref}[1]{\HLinkSuffix{Eq.~(}{equation:#1}{)}}
\newcommand{\tablab}[1]{\label{table:#1}}%
\newcommand{\tabref}[1]{\HLink{Table}{table:#1}}%
\newcommand{\const}{\mathcalb{c}}
\newcommand{\ProbLTR}{\mathbb{P}}%
\newcommand{\Prob}[1]{{\ProbLTR} \mleft[ #1 \mright]}%
\newcommand{\Set}[2]{\left\{ #1 \;\middle\vert\; #2 \right\}}
\newcommand{\set}[1]{\left\{ {#1} \right\}}
\renewcommand{\Re}{\mathbb{R}}%
\newcommand{\Vinf}{V_{\mathbb{Z}}}
\newcommand{\Vpm}{V_{\pm}}
\newcommand{\remove}[1]{}%
\newcommand{\dimT}{\mathcalb{t}}
\newcommand{\dimR}{\mathcalb{r}}
\newcommand{\dcC}{\mathcalb{d}}%
\newcommand{\dcY}[2]{\dcC\pth{#1,#2}}%
\newcommand{\dY}[2]{\left\|#1 - #2 \right\| }%
\newlist{compactenumA}{enumerate}{5}%
\setlist[compactenumA]{topsep=0pt,itemsep=-1ex,partopsep=1ex,parsep=1ex,%
   label=(\Alph*)}%
\begin{document}
\title{Edge Nearest Neighbor in Sampling-Based Motion Planning}
\author{%
   Stav Ashur, %
   Nancy M. Amato, and %
   Sariel Har-Peled%
   \thanks{Siebel School of Computing and Data Science, University of
      Illinois, 201 N. Goodwin Avenue, Urbana, IL 61801, USA}%
}

\markboth{Journal name?}%
{something else}

\maketitle

\begin{abstract}
    Neighborhood finders and nearest neighbor queries are fundamental
    parts of sampling based motion planning algorithms. Using
    different distance metrics or otherwise changing the definition of
    a neighborhood produces different algorithms with unique empiric
    and theoretical properties. In \cite{l-pa-06} LaValle suggests a
    neighborhood finder for the Rapidly-exploring Random Tree (\RRT)
    algorithm \cite{l-rrtnt-98} which finds the nearest neighbor of
    the sampled point on the \emph{swath} of the tree, that is on the
    set of all of the points on the tree edges, using a hierarchical
    data structure. In this paper we implement such a neighborhood
    finder and show, theoretically and experimentally, that this
    results in more efficient algorithms, and suggest a variant of the
    Rapidly-exploring Random Graph (\RRG) algorithm \cite{f-isaom-10}
    that better exploits the exploration properties of the newly
    described subroutine for finding narrow passages.
\end{abstract}

\begin{IEEEkeywords}
    Robot motion planning, Sampling-based motion planning,
    Computational geometry.
\end{IEEEkeywords}

\section{Introduction}

\IEEEPARstart{I}{n} the motion planning problem we are given a robot
$r$, an environment $E$, and two configurations of $r$ in $E$, denoted
$s$ and $t$, and are tasked with finding a \emph{valid} set of motions
of $r$ that would take it from configuration $s$ to $t$. A motion is
valid (or collision-free) if it does not result in $r$ colliding with
itself or any object in the environment.

The problem's intractability \cite{r-cmpg-79} lead to the development
of randomized sampling based approaches. These sampling based motion
planning (\SBMP) algorithms reduce the problem to finding an
$(s,t)$-curve in the implicit \emph{configuration space} (c-space)
\cite{l-spcsa-83} defined by $r$ and $E$ that does not contain any
point representing an invalid configuration. These methods create a
geometric graph in c-space, also known as a \emph{roadmap}, by
sampling random points (configurations) and using them to expand the
graph.  The problem is then reduced further to that of finding a
$(s,t)$-path in the graph.

The process of expanding the geometric graph given a sampled point $p$
uses a \emph{neighborhood finder} subroutine to find a set $N$ of
candidate points. \RRT and \RRG extend new graph edges from every
point in $q\in N$ towards $p$, but stop the extension process if an
invalid configuration is found or a predetermined distance has been
reached, and add an edge possibly spanning a partial trajectory
between $q$ and $p$. The probabilistic roadmap (\PRM) algorithm, on
the other hand, will not add a partial edge, and will only connect $p$
and $q$ if the entire trajectory is collision-free, allowing users to
opt for collision checking the edge in a non-linear fashion. As hinted
by the names of the algorithms, \RRT will use a neighborhood finder
such that $|N|=1$, creating a tree, while \RRG and \PRM require larger
neighborhoods.

The neighborhood finder subroutine takes a c-space graph $G$ and a
c-space point $c$ as arguments, and returns a discrete set of points
$N_G(c) \subseteq G$ according to some heuristic, usually the $k$
nearest neighbors (\knn) according to a chosen distance
metric. Existing neighborhood finders focus on returning a subset of
the graph's vertices while ignoring its edges, even though these edges
are often nothing more than a dense sequence of configurations
individually checked for collision in a slow and expensive process.

This gap can be explained by two properties of the geometric space,
c-space, in which this computation is performed, 1) it can be of
arbitrarily high dimensionality, and 2) it can be a mixture of
Euclidean and non-Euclidean dimensions. Standard algorithms and data
structures applicable in those conditions are somewhat scarce. For
example, the Computational Geometry Algorithms Library (CGAL)
\cite{t-curm-24}, the most widely used computational geometry library,
does not have off-the-shelf $d$-dimensional point-to-segment distance
computation or a $d$-dimensional \emph{axis-aligned bounding box tree}
(\AABB-tree) for segments when $d>3$, and does not have a standard
kernel for a mix of dimension types.

Using a neighborhood finder capable of returning arbitrary points
on the tree's swath substitutes the simple Voronoi exploration bias
of \RRT, where tree vertex is extended with probability proportional to the measure
of its cell in the Voronoi diagram of the tree's nodes, in a similar
bias that extends from either an
interior of an edge or from a vertex with probability proportional to
the measure of the Voronoi cell of that object in the Voronoi diagram
of points and segments induced by the graph. See \figref{voronoi:diff}
for an illustration of this difference for a 2D scenario.

\begin{figure*}[!t]
    \centering \subfloat[Heatmap of distance to nearest
    vertex.]{\includegraphics[width=0.48\linewidth]{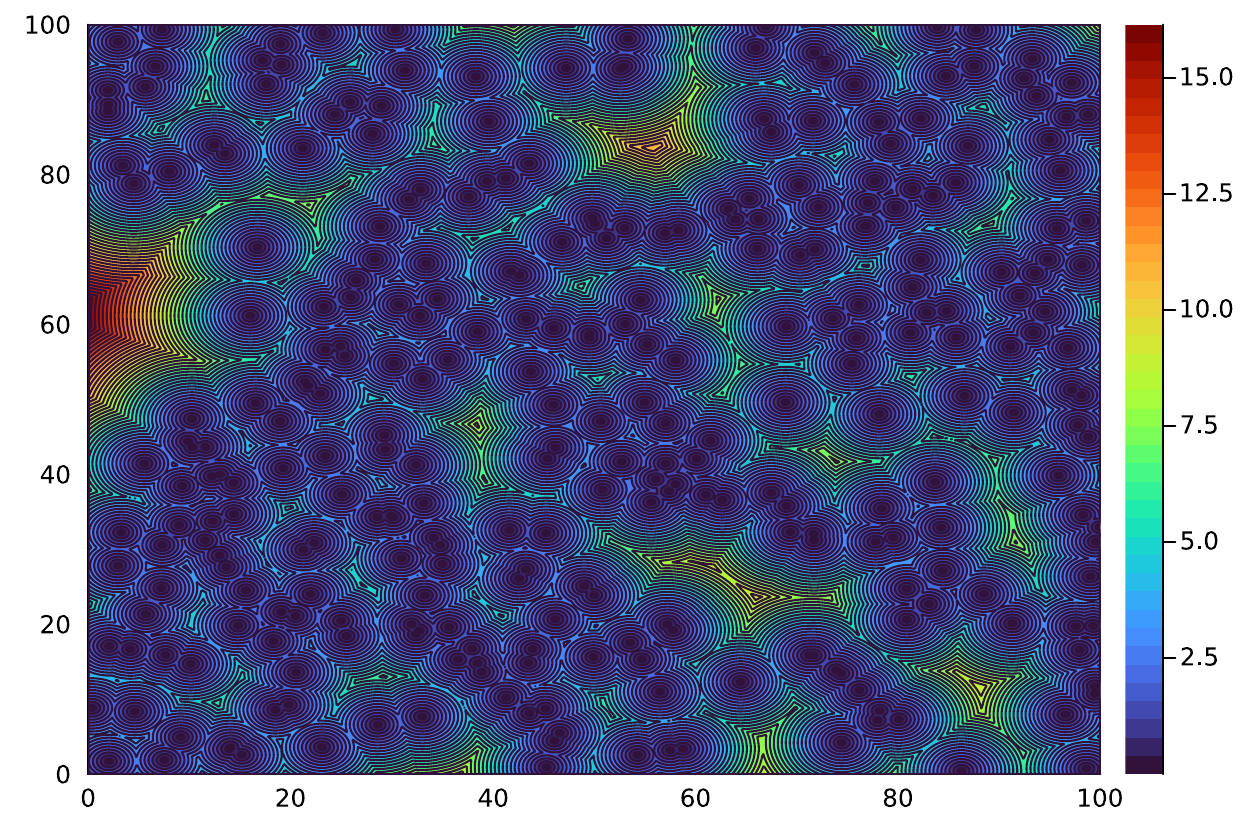}%
       \figlab{voronoi:vertex}} \hfil \subfloat[Heatmap of distance to
    nearest
    edge.]{\includegraphics[width=0.48\linewidth]{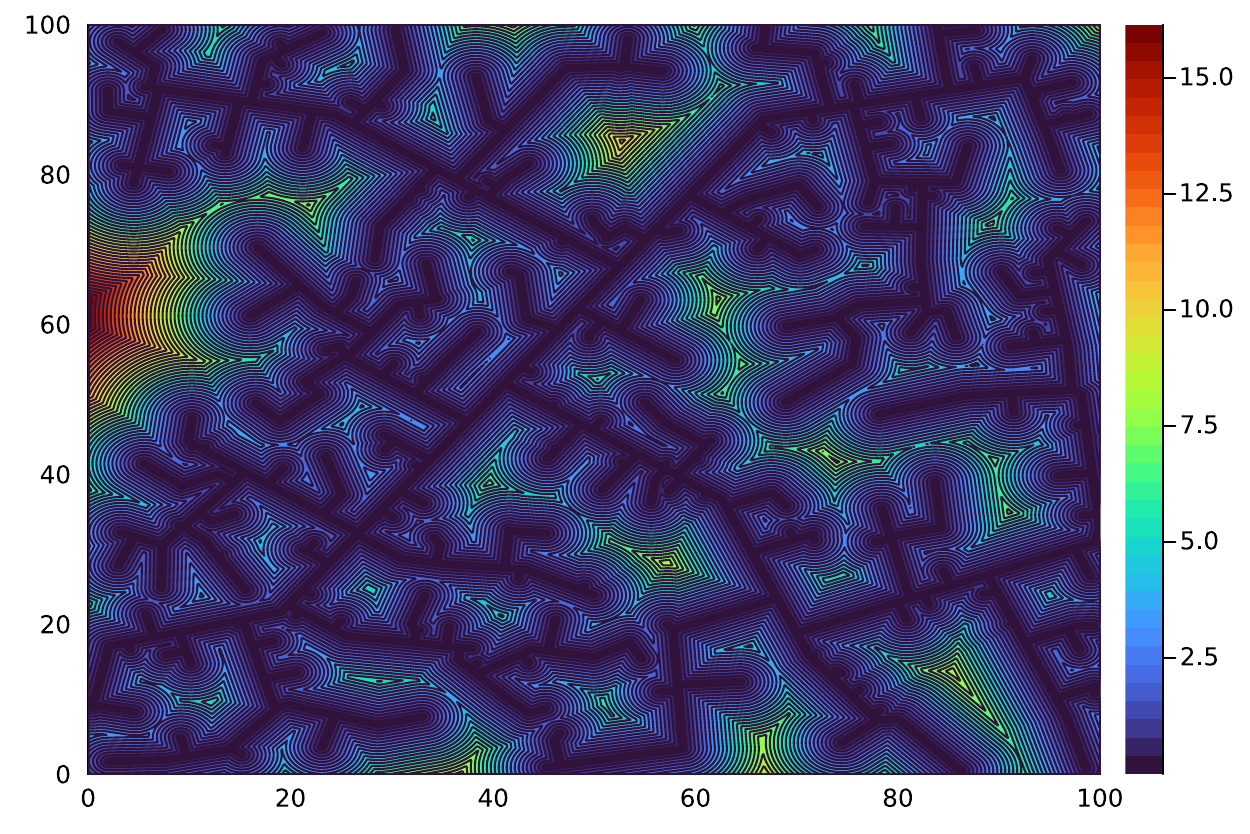}%
       \figlab{voronoi:edge}}
    \caption{Heatmaps showing the nearest neighbor distances to a 2D
       geometric tree. In (a) the set of candidates is the vertices
       only, and in (b) it is the entire swath of the tree.}
    \figlab{voronoi:diff}
\end{figure*}

\paragraph*{Contribution}
In this paper we present the edge-\nn{} neighborhood finder, which can
return the point on a $d$-dimensional segment in
$\R^{\dimT}\times \T^{\dimR}$, where $\T^i$ is the $i$-dimensional
torus, closest to a query point. The neighborhood finder is based on
mixed-$d$-dimensional distance computation primitives that form the
basis for an \AABB-tree capable of storing a set of segments
$S \subseteq \R^{\dimR}\times \T^{\dimT}$ and answering approximate
$\knn$ queries for query points.

We formally prove that this neighborhood finder is expected to produce
c-space graphs with shorter edges when used to construct a geometric
graph, and experimentally show improvement for \RRT
which performs less collision detection (CD) calls when using our
method compared with neighborhood finders considering only graph
vertices. This improvement can be attributed both to the shorter
length of the graphs edges and to a superior exploration bias.

\paragraph*{Paper layout}
We survey the relevant bodies of research at the end of this section
(\secref{related:work}). Important definitions and notations are given
in \secref{preliminaries}, the algorithmic machinery and the \knn{}
query of the edge-\nn{} neighborhood finder are described in
\secref{algorithm}, and the theoretical analysis of edge-\nn{} versus
the classical subroutine (vertex-\nn) is in
\secref{theoretical:analysis}. The experiments and the discussion of
the results are in \secref{experiments} and \secref{discussion}
respectively.



\subsection{Related Work}
\seclab{related:work}

In \cite{l-pa-06}, the \RRT algorithm is introduced as extending
towards a sampled point from its nearest neighbor on the swath of the
c-space tree built thus far. 
LaValle \cite{l-pa-06} also discuss algorithmic details of such
neighborhood finders for different c-spaces. An approximate solution
which stores intermediates along edges in a $kD$-tree for the purpose
of nearest neighbor queries is also suggested as a simple and
effective compromise. In practice, implementations of \RRT usually
consider only tree vertices as candidates from which to extend new
edges.

The \RRG algorithm \cite{f-isaom-10} is a variant of \RRT that,
instead if extending a single edge at every iteration, extends such
edges from all neighbor vertices within a certain distance of the
sample, thus creating a roadmap graph instead of a tree.

Several nearest neighbor methods were created specifically for
neighborhood finders for \SBMP algorithms
\cite{al-ennsm-02,yl-imabe-07,ia-fnnss-15}. These methods were
designed for nearest neighbor queries in non-Euclidean spaces created
by a mixture of positional and rotational degrees of freedom,
e.g. SE(3) or $\R^2 \times S^1$, and even specifically for concurrent
queries used in parallelized \SBMP algorithm \cite{ia-cnsps-18}. In
\cite{pk-qansh-06}, the authors compare the effectiveness of different
exact and approximate nearest neighbor methods for high dimensional
motion planning problems.

A large body of work is dedicated to determining which distance metric
gives rise to good motion planning algorithms when used to determine
the nearest neighbor. In \cite{abdjv-cgdml-00}, several workspace and
c-space distance metrics are compared experimentally,
\cite{k-esdm3-04} compares different approximation algorithms for
computing distance metrics for rotational \DOF{}s in motion planning
for rigid bodies, and \cite{assh-emmm-18} examines different metrics
for multi-agent motion planning problems. In \cite{sbsta-tnfsp-18}, a
new distance metric for mobile robots is suggested, which is based on
decomposing the workspace and prioritizing neighbors in adjacent
regions first.

Outside of the context of motion planning, nearest neighbor queries
for a point among lines or line segments in high dimensions has been
widely researched \cite{aikn-alnnh-09,m-anlsh-15,am-ansls-21}, with
some additional work offering generalizations such as nearest neighbor
flats in high dimensional spaces \cite{ars-annsa-17}, a moving point
query \cite{gzwy-emknn-16}, or in an obstructed environment
\cite{zlgh-mknnq-20}. Note that the last two papers only deal with the
2D case. Efficient open-source implementations of distance computation
and data structures supporting nearest neighbor queries for a variety
of geometric objects can be found in CGAL \cite{t-curm-24}.



\section{Preliminaries}
\seclab{preliminaries}

\subsection{Configuration space using positional and cyclical coordinates}

Here, we introduce notations necessary for the description and
analysis of our algorithms and describe how to handle distances in a
configuration space with a mix of translational and rotational
coordinates.

\subsubsection{Working in cyclical space -- Topological cover}

Consider the case that all the coordinates in the configuration space
are cyclical (that is, the robot has only rotational \DOF{}s, and
$d = \dimR$), and assume for simplicity that each dimension is the
interval $[0,1]$. Thus, two points $p,q \in \Re^d$ are equivalent as
configurations if the fractional parts of their coordinates are
equal. In particular, the space $\Re^d$ now decomposes into the
natural axis-aligned unit grid $\tiling$, with the cells being
translated copies of $[0,1]^d$. Thus, the grid forms a \emph{cover} of
the torus space $\T^d$, where each grid cell is a copy of $\T^d$. The
copies of the origin in this cover are:
\begin{equation*}
    \Vinf = \Z^{d}.
\end{equation*}
The immediate neighboring copies of the origin (including itself) in
this cover are
\begin{equation*}
    \Vpm
    =
    \{-1,0,1\}^{d}.
\end{equation*}
For a point $p \in \Re^d$, we have the following associated copies
\begin{equation*}
    \VinfX{p}
    = \Set{p + v }{ v \in \Vinf}
    \quad\text{and}\quad%
    \VpmX{p} = \Set{p + v }{ v \in \Vpm}.
\end{equation*}

To be somewhat tediously explicit, all the points of $\VinfX{p}$
correspond to a single point of $\T^d$. These definitions naturally
extend to sets $P \subseteq \Re^d$:
\begin{equation*}
    \VinfX{P}
    = \bigcup_{p \in P} \VinfX{p}
    \qquad\text{and}\qquad%
    \VpmX{P}%
    =%
    \bigcup_{p \in P} \VpmX{p}.
\end{equation*}

The \emph{tiling} of $\Re^d$ associated with the point $p$, denoted by
$\tilingX{p}$, is the natural grid $\tiling$ translated by $p$.
Formally, $\tilingX{p} = p + \tiling$.  Similarly, the \emphw{cluster}
of $T$, denoted by $\clusterX{T}$, is the set of $3^d$ cells adjacent
to a grid cell $T \in \tiling$ (including itself). The cluster of the
tile of $\tiling$ containing the origin is denoted $\cluster$.

Consider two points $p,q \in \Re^d$ in this cyclical space. The
distance between them is the minimum distance between any two copies
of them. Specifically, we have
\begin{equation*}
    \dcY{p}{q}
    =
    \min_{p'\in \VinfwX{p}, q' \in \VinfwX{q}} \dY{p'}{q'}.
    \eqlab{dist:original}
\end{equation*}
This naturally extends to distance between sets
\begin{equation*}
    \dcY{X}{Y} = \min_{p\in X, q\in Y} \dcY{p}{q}.
\end{equation*}






\begin{figure}[t!]
    \centering \includegraphics[page=2,
    width=0.9\linewidth]{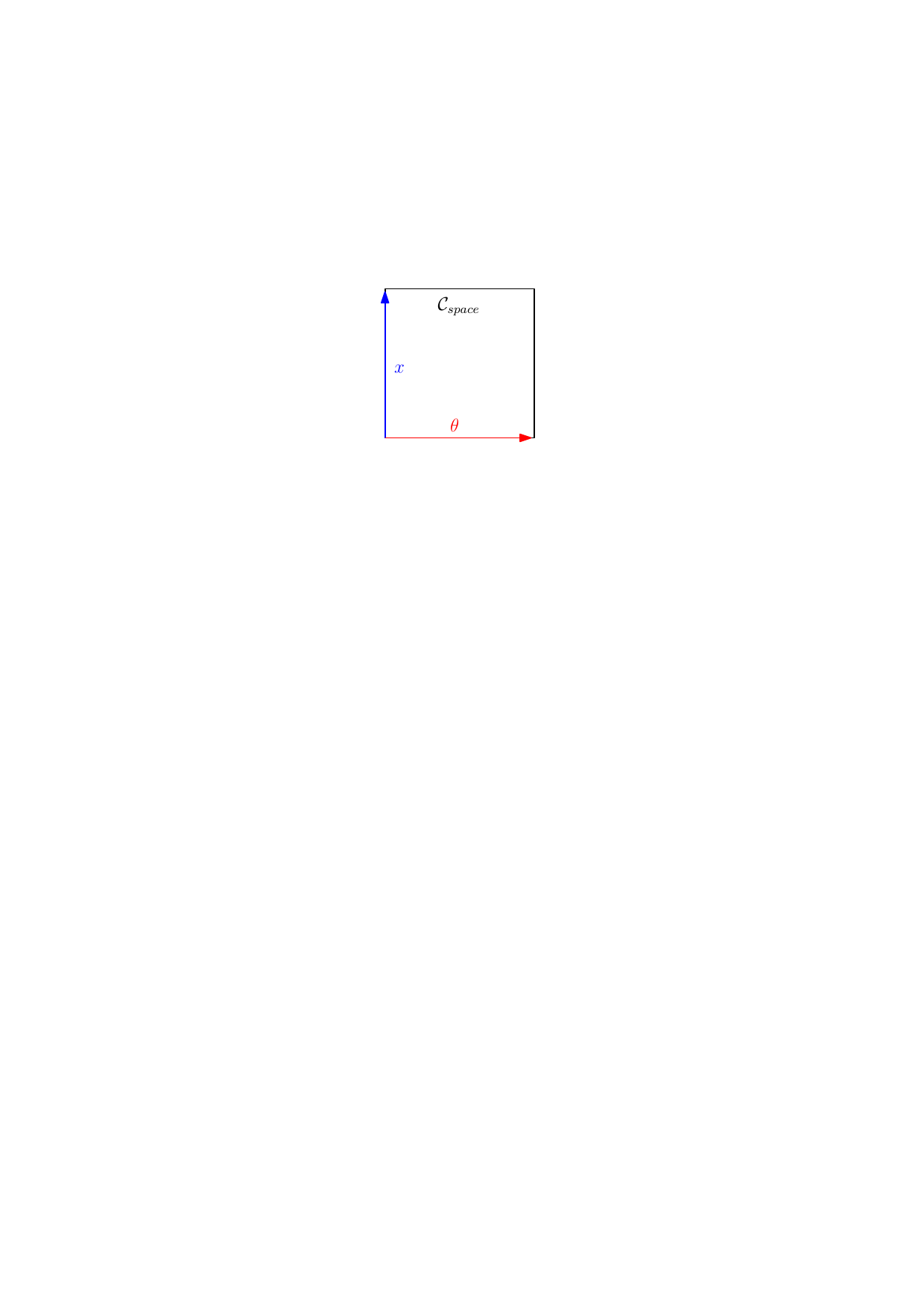}
    \caption{An illustration of a 2D c-space
       $\cspace \subseteq \R \times \T$ and the partial tiling $\Vpm$
       associated with it. The translational \DOF is marked by $x$,
       and the rotational \DOF by $\theta$. $\Vpm$ is a set of three
       copies of $\cspace$.}
    \figlab{topological:cover}
\end{figure}



Consider two points $p, q \in \Re^d$. The copy of $p$ in $\VinfX{p}$
that is closest to $q$ (under the Euclidean distance) is denoted by
$\nn_{\VinfwX{p}\!}\pth{q}$.  For a point $p' \in \VinfwX{p}$, all the
points $q \in \Re^d$ such that $\nn_{\VinfwX{p}\!}\pth{q}=p'$, form
the \emphw{Voronoi cell} of $p'$, denoted by $\VorCellX{p'}$. The
collection of all such sets, denoted by $\VDY{\Z}{p}$, form the
\emphw{Voronoi diagram} of $\VinfwX{p}$.

\newcommand{\One}{\mathbf{1}}%
\begin{observation}
    \obslab{vor:cells:are:grid}%
    The Voronoi diagram $\VDY{\Z}{p}$, being the Voronoi diagram of
    the vertices of a shifted grid $\VinfwX{p}$, is the shifted grid
    $\VDY{\Z}{p} = p+\One/2 + \tiling$, where
    $\One = (1,\ldots, 1) \in \Re^d$.
\end{observation}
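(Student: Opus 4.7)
The plan is to reduce to the case $p = 0$ by translation invariance, then decompose the nearest-neighbor problem coordinatewise.

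First I would note that the Voronoi diagram is equivariant under translation: $\VDY{\Z}{p} = p + \VDY{\Z}{0}$, since $\nn_{\VinfwX{p}}(q) = p + \nn_{\VinfwX{0}}(q - p)$. So it suffices to prove $\VDY{\Z}{0} = \One/2 + \tiling$, i.e.\ that the Voronoi diagram of the integer lattice $\Z^d$ is the unit grid shifted by $(1/2,\dots,1/2)$.

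Next I would identify the Voronoi cell of the origin. Because the squared Euclidean distance decomposes as $\lenX{q - v}^2 = \sum_{i=1}^{d}(q_i - v_i)^2$ and each integer coordinate $v_i \in \Z$ can be chosen independently, the nearest lattice point to $q = (q_1,\dots,q_d)$ is obtained by rounding each coordinate $q_i$ to its nearest integer. Therefore $0 = (0,\dots,0)$ is the (or a) nearest lattice point to $q$ exactly when $|q_i| \le 1/2$ for every $i$, which says that $\VorCellX{0} = [-1/2, 1/2]^d$.

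Applying the same reasoning to every lattice point $v \in \Z^d$ (again by translation), the Voronoi cell of $v$ is the axis-aligned unit cube $v + [-1/2,1/2]^d$. The collection of these cubes, as $v$ ranges over $\Z^d$, is exactly the axis-aligned unit grid whose vertices lie at $\Z^d + \One/2$; equivalently, it is the grid $\tiling$ translated by $\One/2$. Combined with the first step, this yields $\VDY{\Z}{p} = p + \One/2 + \tiling$, as claimed.

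There is no real obstacle here: the only thing to be careful about is the coordinatewise decomposition of the squared distance (which makes the argument separable) and the bookkeeping that the cell $[-1/2,1/2]^d$ centered at the origin corresponds to a cell of $\tiling$ shifted by $\One/2$ rather than the unshifted grid. Boundary ties (points with some $|q_i| = 1/2$) lie on cell boundaries of the Voronoi diagram, matching the closed/shared faces of $\tiling$, so no separate case analysis is needed.
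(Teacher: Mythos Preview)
Your argument is correct. The paper states this as an \emph{Observation} and provides no proof at all; the statement is taken as self-evident from the phrase ``being the Voronoi diagram of the vertices of a shifted grid.'' Your proposal supplies exactly the details the paper omits: translation invariance to reduce to $p=0$, coordinatewise separability of the squared Euclidean distance to identify $\VorCellX{0} = [-1/2,1/2]^d$, and then translation to obtain all cells. There is nothing to compare against, since the paper's ``proof'' is the empty one; your write-up is a clean justification of what the authors treat as immediate.
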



\begin{observation}
    \obslab{vor:cell:intersects:cluster}
    A unit cube $C$ whose center lies in a cell $T \in \tilingX{q}$,
    can only intersect tiles in $\clusterX{T}$.
\end{observation}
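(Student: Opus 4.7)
The plan is a direct coordinate computation showing that $C$ is contained in the axis-aligned bounding box of $\clusterX{T}$, hence every cell of $\tilingX{q}$ that $C$ meets must lie in $\clusterX{T}$.

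First, I would set up coordinates. Since $T \in \tilingX{q}$, there is an index $k = (k_1,\dots,k_d) \in \Z^d$ such that $T = q + k + [0,1]^d$. Let $c$ denote the center of $C$; by hypothesis $c \in T$, so for every coordinate $i$,
\begin{equation*}
    q_i + k_i \;\le\; c_i \;\le\; q_i + k_i + 1.
\end{equation*}
Because $C$ is an axis-aligned unit cube centered at $c$, we also have $C = c + [-\tfrac12,\tfrac12]^d$, so any $p \in C$ satisfies $|p_i - c_i| \le \tfrac12$.

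Next, I would combine the two bounds coordinatewise to obtain, for every $p \in C$ and every $i$,
\begin{equation*}
    q_i + k_i - \tfrac{1}{2} \;\le\; p_i \;\le\; q_i + k_i + \tfrac{3}{2}.
\end{equation*}
On the other hand, from the definition of a cluster, the set $\clusterX{T}$ consists of the $3^d$ cells $q + (k+\epsilon) + [0,1]^d$ for $\epsilon \in \{-1,0,1\}^d$, and their union as a subset of $\R^d$ is exactly $q + k + [-1,2]^d$. Since $[-\tfrac12,\tfrac32] \subset [-1,2]$ in each coordinate, the previous display yields $C \subseteq \bigcup_{T' \in \clusterX{T}} T'$.

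Finally, any cell $T'' \in \tilingX{q}$ that $C$ intersects must contain a point of $C$, and therefore a point of $\bigcup \clusterX{T}$; since the cells of $\tilingX{q}$ tile $\R^d$ with disjoint interiors, this forces $T'' \in \clusterX{T}$, proving the observation. There is no real obstacle here — the entire argument is a one-dimensional interval containment applied in each coordinate — the only thing to be careful about is the implicit convention that both $C$ and the cells of $\tilingX{q}$ are axis-aligned unit cubes, which is consistent with \obsref{vor:cells:are:grid}.
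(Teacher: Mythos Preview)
The paper states this observation without proof, so there is no argument of its own to compare against; your coordinatewise computation is the natural justification and the substantive part --- the bound $q_i + k_i - \tfrac12 \le p_i \le q_i + k_i + \tfrac32$ for every $p\in C$ --- is correct.

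There is one small slip in the final paragraph. From $C \subseteq \bigcup_{T'\in\clusterX{T}} T' = q + k + [-1,2]^d$ and ``disjoint interiors'' alone you cannot conclude that any tile meeting $C$ lies in $\clusterX{T}$: a tile $q+m+[0,1]^d$ with $m_i = k_i \pm 2$ also touches that $3$-cube along its boundary, and disjointness of interiors does not exclude it. The fix is already in your hands --- bypass the union and use your tighter box directly. If $T'' = q+m+[0,1]^d$ meets $C$, pick $p\in T''\cap C$; then $q_i+m_i \le p_i \le q_i+k_i+\tfrac32$ forces $m_i \le k_i+1$, and $q_i+k_i-\tfrac12 \le p_i \le q_i+m_i+1$ forces $m_i \ge k_i-1$, so $m_i-k_i\in\{-1,0,1\}$ and $T''\in\clusterX{T}$.
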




\begin{corollary}
    \corlab{point:distances}%
    For any two points $p, q \in \R^d$, with $p$ in a cell
    $T \in \tiling$, the distance $\dcY{p}{q}$, is determined by the
    copies of $\VinfX{q}$ in $\VpmX{T}$. That is
    \begin{equation*}
        \dcY{p}{q}
        =
        \min_{p'\in \VinfwX{p}, q' \in \VinfwX{q}} \dY{p'}{q'}
        =%
        \min_{q'\in \VinfwX{q}\cap\clusterX{T}} \dY{p}{q'}.
    \end{equation*}
\end{corollary}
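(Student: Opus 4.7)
The plan is to reduce the double minimization to a single one using lattice symmetry, and then invoke the two preceding observations to localize the minimizer.

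First, I would remove the inner minimization over $\VinfwX{p}$. Since $\VinfwX{p} = p + \Z^d$ and $\VinfwX{q} = q + \Z^d$, the pairwise differences $p' - q'$ as $p', q'$ vary over these two lattices depend only on the integer vector $v_1 - v_2$, which itself ranges over all of $\Z^d$. Hence every value of $\dY{p'}{q'}$ that appears in the minimum is already attained when $p' = p$ is held fixed, so $\dcY{p}{q} = \min_{q' \in \VinfwX{q}} \dY{p}{q'}$, and it suffices to show that the minimizing $q^\ast$ lies in $\VpmX{T}$.

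Let $q^\ast \in \VinfwX{q}$ be a nearest copy of $q$ to $p$. Then $p$ lies in the Voronoi cell $\VorCellX{q^\ast}$ of the diagram $\VDY{\Z}{q}$. By \obsref{vor:cells:are:grid}, this cell is a unit cube centered at $q^\ast$. Let $T^\ast \in \tiling$ be the tile containing $q^\ast$. Applying \obsref{vor:cell:intersects:cluster} to this unit cube gives that it can intersect only tiles in $\clusterX{T^\ast}$, and since $p \in T$ lies inside the cube we conclude $T \in \clusterX{T^\ast}$. By the symmetry of the neighbor relation on $\tiling$, this is equivalent to $T^\ast \in \clusterX{T}$, and therefore $q^\ast \in \clusterX{T} = \VpmX{T}$, as required.

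The only part that needs real care is keeping straight which shifted grid each object lives in: the standard tiling $\tiling$ that carries $T$ versus the Voronoi grid $\VDY{\Z}{q}$ that carries $\VorCellX{q^\ast}$. Once those are tracked, the corollary follows immediately from the two observations plus the lattice-symmetry reduction; no fresh geometric estimate is needed.
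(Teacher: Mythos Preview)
Your proof is correct and follows essentially the same route as the paper's: identify the minimizing copy $q^\ast$ as the one whose Voronoi cell in $\VDY{\Z}{q}$ contains $p$, then use \obsref{vor:cells:are:grid} to recognize that cell as a unit cube centered at $q^\ast$ and conclude that $q^\ast$ must lie in $\clusterX{T}$. The paper's version is terser---it cites only \obsref{vor:cells:are:grid} and leaves the final localization implicit---whereas you spell out the lattice-symmetry reduction explicitly and route the localization through \obsref{vor:cell:intersects:cluster} plus the symmetry of the adjacency relation; both arrive at the same place by the same underlying geometry.
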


\begin{proof}
    We notice that
    $\argmin_{p'\in \VinfwX{p}, q' \in \VinfwX{q}} \dY{p'}{q'}$ is the
    copy of $q$ in $\VinfwX{q}$ whose cell in $\VDY{\Z}{q}$ contains
    $p$. From \obsref{vor:cells:are:grid} we immediately get that the
    Voronoi cell that contains $p$ must belong to a copy of $q$ found
    in a cell of $\cluster$.
\end{proof}

\begin{corollary}
    \corlab{set:distances}%
    For any two sets $X,Y\subseteq \R^d$ such that $X$ is in a tile
    $T \in \tiling$, it is enough to compute the distances between $X$
    and the points of $\clusterX{T}$ in order to compute
    $\dcY{X}{Y}$. Formally:
    \begin{equation*}
        \dcY{X}{Y} = \min_{p\in X, q\in \VinfwX{Y}\cap\clusterX{T}} \dcY{p}{q}.
    \end{equation*}
\end{corollary}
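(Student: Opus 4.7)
The plan is to reduce the set-to-set claim to the point-to-point statement already proved in \corref{point:distances}, by peeling off the outer minimum over $X$ and $Y$ and using the hypothesis that every point of $X$ lies in the same tile $T$.

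\medskip

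First I would start from the definition
\begin{equation*}
    \dcY{X}{Y} = \min_{p \in X,\, q \in Y} \dcY{p}{q},
\end{equation*}
and then substitute, for each fixed pair $(p,q)$, the identity given by \corref{point:distances}. Because $p$ lies in the tile $T \in \tiling$ (the same $T$ for every $p \in X$, by hypothesis), that corollary yields
\begin{equation*}
    \dcY{p}{q} = \min_{q' \in \VinfwX{q} \cap \clusterX{T}} \dY{p}{q'}.
\end{equation*}
Plugging this in and interchanging the two minimizations gives
\begin{equation*}
    \dcY{X}{Y} = \min_{p \in X} \min_{q \in Y} \min_{q' \in \VinfwX{q} \cap \clusterX{T}} \dY{p}{q'}.
\end{equation*}

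\medskip

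The second step is to collapse the two inner minimizations. Since $\VinfwX{Y} = \bigcup_{q \in Y} \VinfwX{q}$, ranging $q$ over $Y$ and then $q'$ over $\VinfwX{q} \cap \clusterX{T}$ is the same as ranging $q'$ directly over $\VinfwX{Y} \cap \clusterX{T}$. This yields
\begin{equation*}
    \dcY{X}{Y} = \min_{p \in X,\; q' \in \VinfwX{Y} \cap \clusterX{T}} \dY{p}{q'},
\end{equation*}
and since for $p \in T$ and $q' \in \clusterX{T}$ the Euclidean distance $\dY{p}{q'}$ is a valid upper bound on $\dcY{p}{q'}$ (and equals it by \corref{point:distances} applied to $p$ and the configuration represented by $q'$), this is exactly the claimed formula up to renaming $q'$ to $q$.

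\medskip

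The step that needs the most care is the collapsing of the two inner minima: one must check that every $q' \in \VinfwX{Y} \cap \clusterX{T}$ indeed arises as a copy of some $q \in Y$ (immediate from the definition of $\VinfwX{Y}$), and that no spurious elements are introduced. Everything else is a routine rewriting of definitions, so I do not expect a genuine obstacle; the corollary is essentially the ``lift'' of \corref{point:distances} from points to sets, enabled by the fact that the restriction $X \subseteq T$ makes the relevant cluster $\clusterX{T}$ independent of the particular choice of $p \in X$.
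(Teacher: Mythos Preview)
Your approach is correct and matches how the paper treats this corollary: the paper states it without proof, as the immediate lift of \corref{point:distances} from points to sets, which is exactly the reduction you carry out.

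One small slip in your final step: you assert that for $p\in T$ and $q'\in\clusterX{T}$ the equality $\dY{p}{q'}=\dcY{p}{q'}$ holds. This is not true pointwise (e.g.\ in one dimension take $T=[0,1]$, $p=0.1$, $q'=1.9\in\clusterX{T}$: then $\dY{p}{q'}=1.8$ but $\dcY{p}{q'}=0.2$, since the copy $-0.1$ of $q'$ is closer). What \emph{is} true is that the two \emph{minima} coincide: from $\dcY{p}{q'}\le\dY{p}{q'}$ one gets $\min\dcC\le\min\dY{}{}=\dcY{X}{Y}$, and conversely every $q'\in\VinfwX{Y}\cap\clusterX{T}$ represents some $q_0\in Y$ with $\dcY{p}{q'}=\dcY{p}{q_0}$, so the $\dcC$-minimum over the cluster is at least $\dcY{X}{Y}$. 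Your derivation up to the $\dY$-formula is already the substantive content (and matches the right-hand side of \corref{point:distances}); this last reconciliation with the $\dcC$ written on the right-hand side here is only bookkeeping.
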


\subsubsection{The configuration space}

Let $E$ be a $d_E$-dimensional environment where $d_E\in \{2,3\}$. We
assume without loss of generality that $E$ is a $d_E$-dimensional
box. Let $r$ be a robot with $d$ degrees of freedom (\DOFs), out of
which $\dimT$ are translational \DOFs (i.e. $\dimT>0$ if the robot is
mobile) and $\dimR$ are rotational \DOFs (i.e., cyclical), and let
$\cspace \subseteq \R^{\dimT}\times \T^{\dimR}$ be the associated
c-space. We treat this space as a product space.

For two points $p ,q \in \cspace$, consider their components
$ p =(p_\dimT, p_\dimR), \, q = (q_\dimT, q_\dimR)\in \R^{\dimT}\times
\T^{\dimR}$.  Let $\ell_\dimT$ be the distance between $p_\dimT$ and
$q_{\dimT}$ in the translational space $\R^{\dimT}$, and
$\ell_{\dimR}$ be the be the distance between $p_\dimR$ and $q_\dimR$
in the rotational space $\T^{\dimR}$.  The distance between $p$ and
$q$ is the $L_2$-norm of the combined two distances. That is,
$\dcY{p}{q} = \sqrt{ \ell_{\dimT}^2 + \ell_{\dimR}^2}$. See
\figref{topological:cover} for an example of a configuration space and
the partial tiling $\Vpm$ associated with it.


\section{The new neighborhood finder}
\seclab{algorithm}

In this section, we describe the edge-\knn{} neighborhood finder and
its building blocks.  As a reminder, the configuration space is
$\cspace=\R^{\dimT}\times\T^{\dimR}$, and the input is a set of line
segments $S\subseteq \cspace$.  Our purpose is to build a data
structure that stores $S$, and answers $(1+\eps)$-approximate $\knn{}$
point queries for a predetermined $\eps > 0$. That is, given a point
$p\in\cspace$ and an integer $k$, return a set of $k$ segments such
that the distance of the $i$\th segment $s_i$ from $p$,
$\dcY{p}{s_i}$, is within a factor of $(1+\eps)$ from $p$'s $i$\th
nearest neighbor in $S$.

We first present the basic geometric operations modified to work in
$\cspace$, and then use them to build an \AABB-tree\footnote{\AABB =
   Axis-Aligned Bounding Box.} for $\cspace$ segments that supports
approximate $\knn$ queries for points of $\cspace$. This \AABB-tree is
the main component of our neighborhood finder, and can return a set of
$k$ neighbors of a point $p\in \cspace$ such that $\cspace$ segments,
edges of the c-space graph, may contain a single member of the
neighbor set.

The \AABB-tree is a modified $kd$-tree for storing segments, and answering nearest-neighbor and intersection queries.
The wrinkle is that we need to support rotational coordinates, and we next describe the basic geometric primitives necessary to
implement this efficiently. We describe the \AABB-tree in more detail in \secref{neighborhood:finder}.

\subsection{Basic Operations in \cspace}

\paragraph*{Point-to-Point Distance}

While \corref{point:distances} gives rise to a simple $O(3^d)$ time
operation, since the problem is decomposable we can get a
straightforward $O(d)$ time algorithm by calculating a 1-dimensional
distance in every dimension. The distance between two points
$a, b\in\T^1$ is easily computed as follows. Assume without loss of
generality that $a < b$, with $a,b \in \T = [0,1]$, we have that
\begin{align}
  \eqlab{one:dim:dist}
  \dcY{a}{b}
  = %
  \min(b - a, a + (1 - b))
\end{align}
For points $p,q\in \cspace$, we compute
$\sqrt{\sum_{i=1}^d\dcY{p[i]}{q[i]}^2 }$, where the $\dcC$ is either
the Euclidean or cyclical distance, as appropriate for the coordinate
under consideration.

\paragraph*{Point-to-\AABB Distance}

A similar strategy to that used for point-to-point distance
computation can be employed when computing the distance between a
point $p\in \cspace$ to an axis-aligned bounding box
$b = I_1 \times ... \times I_d \subseteq \cspace$. Since $b$ is
axis-aligned, this problem is, again, decomposable, and can be
presented as $d$ calculations of a $1$-dimensional point-to-interval
distance. While the general case is slightly harder, our data
structure contains only \AABB{}s fully contained within a single tile
of $\tiling$, namely $(0,1)^{\dimR}$, thus enabling us to simply check
the signed distances $p[i] - b.max[i]$ and $b.min[i] - p[i]$ between
the coordinate of the point and the appropriate interval in order to
decide whether $p[i]\in I_i$ or get the true distance between the
point and the interval.

\paragraph*{Point-to-Segment Distance}

The task of computing the distance between a point $p\in \cspace$ and
a segment $s\subseteq \cspace$ is somewhat trickier as it is not
decomposable. However, for the simple case where
$s\subseteq \R^\dimT \times (0,1)^{\dimR}$, i.e. $s$ does not ``wrap
around'' any of the cyclic dimensions, we can avoid exponential
dependence on the dimension by only computing point-to-segment
distances between $s$ and $O(d)$ members of $\VinfX{p}$ as we show in
the following claim.

\begin{claim}
    \clmlab{bisectors}%
    For a point $p\in \cspace$ and a segment $s\subseteq \R^{\dimT}\times
    (0,1)^{\dimR}$, there is a set of $O(d)$ points $X\subseteq\VinfX{p}$,
    which can be computed efficiently, such that
    \begin{equation*}
        \dcY{p}{s} = \min_{p\in X, q\in s}\dY{p}{q}.
    \end{equation*}
\end{claim}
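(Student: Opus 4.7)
The plan is to reduce computing $\dcY{p}{s}$ to a handful of ordinary Euclidean point-to-segment calculations by enumerating the Voronoi cells of $\VinfX{p}$ that the segment $s$ crosses. The copies comprising $\VinfX{p}$ differ from $p$ only in the $\dimR$ cyclic coordinates, so by \obsref{vor:cells:are:grid} the Voronoi diagram $\VDY{\Z}{p}$ is the product of $\R^{\dimT}$ (trivial there) with a shifted unit grid in the cyclic part. For any $q \in \cspace$, $\dcY{p}{q} = \dY{p'}{q}$, where $p' \in \VinfX{p}$ is the unique copy whose Voronoi cell contains $q$. Hence
\[
    \dcY{p}{s}
    \;=\; \min_{q \in s}\, \dcY{p}{q}
    \;=\; \min_{p' \in X,\, q \in s}\, \dY{p'}{q},
\]
where $X \subseteq \VinfX{p}$ is the set of copies whose Voronoi cell is intersected by $s$. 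It then suffices to produce $X$ and invoke a standard Euclidean point-to-segment primitive on each copy.

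The heart of the argument is bounding $|X|$ by $O(d)$, which is the main obstacle. Since $s \subseteq \R^{\dimT}\times(0,1)^{\dimR}$, the cyclic projection of $s$ is confined to a single tile of $\tiling$. In each cyclic dimension $i$, the Voronoi-boundary hyperplanes are the axis-aligned hyperplanes at coordinates $p[i] + 1/2 + k$ for $k \in \Z$, and exactly one such value lies inside the interval $(0,1)$. A straight segment crosses any hyperplane at most once (unless it lies on it), so $s$ crosses at most $\dimR$ Voronoi boundaries in total, visits at most $\dimR + 1 = O(d)$ Voronoi cells, and thus $|X| \leq \dimR + 1$. Crucially, the arbitrary extent of $s$ in $\R^{\dimT}$ is harmless because $\VinfX{p}$ is translation-invariant in those coordinates; it is precisely the bounded cyclic extent $(0,1)^{\dimR}$ that delivers the $O(d)$ bound, improving over the $3^d$ bound one would obtain from \corref{point:distances} alone.

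For the ``efficiently computed'' part I would parametrize $s(t) = a + t(b-a)$, $t \in [0,1]$, determine the (at most $\dimR$) crossing parameters $t_i$ by solving a one-dimensional equation per cyclic axis, and sort them; this splits $[0,1]$ into at most $\dimR + 1$ sub-intervals, each lying inside a single Voronoi cell, and the associated copy of $p$ is read off by rounding the cyclic coordinates of an interior sample point in each sub-interval. The total cost is $O(d \log d)$. The only subtleties are degenerate crossings (for example, when $s$ lies on a Voronoi boundary, or when a crossing coincides with $t\in\{0,1\}$), but these merely collapse some sub-intervals and do not affect correctness of the resulting set $X$.
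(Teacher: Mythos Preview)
Your argument is correct and follows essentially the same route as the paper: identify $X$ as the set of copies of $p$ whose (axis-aligned grid) Voronoi cells meet $s$, bound the number of bisector crossings by one per cyclic dimension, and compute the crossing parameters along $s$ in $O(d\log d)$ time. Your treatment is in fact slightly sharper---you explicitly separate the translational and cyclic parts and obtain $|X|\le \dimR+1$, whereas the paper states the bound as $d+1$---and you handle degeneracies, which the paper omits.
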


\begin{proof}
    The only points in $\VinfX{p}$ that might define $\dcY{p}{s}$ are
    those points whose Voronoi cell in $\VDY{\Z}{p}$ intersects $s$,
    i.e.
    $\Set{p' \in \smash{\VinfX{p}}}{\smash{\VorCellX{p'}} \cap s \neq
       \emptyset}\vphantom{\VinfX{p}}\!$. Let $s=ab$. By
    \obsref{vor:cell:intersects:cluster}, the sequence of Voronoi
    cells of $\VDY{\Z}{p}$ encountered by continuously moving from $a$
    to $b$ along $s$ all belong to a single cluster. This sequence
    starts with the (grid/Voronoi cell)
    $\VorCellX{\nn_{\VinfwX{p}}(a)}$, and ends with
    $\VorCellX{\nn_{\VinfwX{p}}(b)}$. The current cell changes, only
    when this moving point crosses a bisector of the Voronoi diagram.
    The Voronoi diagram in this case being the shifted grid, has
    bisectors that are hyperplanes orthogonal to the axes. In
    particular, as $s$ extent in each rotational dimension is at most
    $1$, it follows that $s$ can cross at most one bisection that is
    orthogonal to a specific dimension. Thus, $s$ crosses at most $d$
    bisectors, and overall visits at most $d+1$ cells.

    To this end, we compute the $2d$ intersections of the line
    supporting $s$ with the (at most) $2d$ (or being more careful, the
    $d$ relevant ones) hyperplanes bounding its cell. By sorting them
    along $s$, we can then perform this traversal in $O(d\log d)$
    time.

    See \algref{point:s:dist} for the pseudocode.
\end{proof}

\begin{algorithm}[t]
    \caption{Point to Segment Distance}%
    \alglab{point:s:dist}%
    \SetAlgoLined%
    \SetKwInOut{Input}{input} \Input{Point $p$, segment
       $s = ab\subseteq (0,1)^{\dimR}$}, $P \gets \emptyset$
    $curr \gets a$\\
    $v \gets \vv{b-a}$\\
    $VD \gets \VDC(\VpmX{p})$\\
    \While{$curr \neq b$}{
       $b \gets \mathtt{get\_next\_bisector}(curr, v, VD)$\\
       $P \gets P~\cup~\mathtt{get\_point\_from\_bisector}(b)$\\
       $curr \gets \mathtt{advance\_to\_next\_bisector}(curr, v, b)$\\
    }
    $res \gets \min_{p \in P}\set{\dY{p}{s}}$\\
    \textbf{return} $res$
\end{algorithm}

In order to be able to use \algref{point:s:dist} we break down every
graph edge into sub-segments that are contained in $(0,1)^{\dimR}$. See
\figref{fig:cut:seg} for an illustrations. Similar reasoning to that
of \clmref{bisectors} gives us that a segment that corresponds to the
shortest distance between its two $\cspace$ endpoints will only be
broken down into $O(d)$ segments, and an algorithm similar to
\algref{point:s:dist} can be used to compute these segments.

\begin{figure}[t!]
    \centering
    \includegraphics[width=0.5\linewidth]{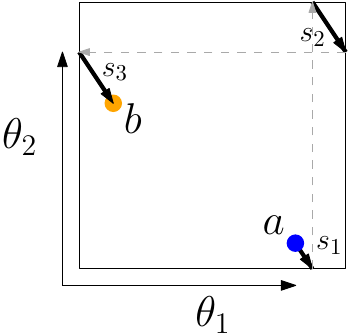}
    \caption{The shortest segment between the points marked $a$ and
       $b$ in a $\cspace$ composed of two rotational \DOF{}s. The
       segment is divided into three segments marked $s_1, s_2,$ and
       $s_3$.}
    \figlab{fig:cut:seg}
\end{figure}

\begin{figure}[t!]
    \centering
    \includegraphics[page=10, width=0.8\linewidth]%
    {\si{point_to_seg}}%
    \caption{An illustration of a point-to-segment distance
       calculation between $p$ and $s$ in a 2D c-space
       $\cspace = \T^2$. The two rotational \DOFs are marked by
       $\theta_1$ and $\theta_2$, and $\Vpm$ is a set of 9 copies of
       $\cspace$. Marked in orange are the points in $\VpmX{p}$ that
       give rise to the bisectors required for the point-to-segment
       distance, and the bisectors forming the Voronoi diagram $\VDY{\Z}{p}$. The three
       points for which we check the distance to $s$ are denoted
       $p_1, p_2,$ and $p_3$, and their Voronoi cells, visited by $s$, are highlighted.}
    \figlab{point:to:seg}
\end{figure}

Finally, since we have reduced the point-to-segment problem to
$O(d^2)$ point-to-segment distance computations in $\R^d$, we can use
a simple subroutine that returns the Euclidean distance as well as the
point on $s$ closest to $p$, that is $\nn_{s}(p)$.

\paragraph*{Computing segment-\AABB intersection}
By storing only boxes and segments contained in $(0,1)^{\dimR}$ we can use a
simple Euclidean subroutine in order to check box-segment
intersections.

\subsection{Edge-\knn{} neighborhood finder}
\seclab{neighborhood:finder}

Using the simple operations described above we construct an \AABB-tree
to store the $d$-dimensional segments and answer approximate $\knn$
queries using a simplified version of the well known algorithm by Arya \etal \cite{amnsw-oaann-98}.

Our tree is built by recursively splitting an \AABB containing a set of segments into at most three children nodes, starting at the root with an \AABB the full list. When splitting a node we define an axis-parallel hyperplane that splits the segments into three roughly equal groups, two sets that are fully contained in either sides, and one set of all segments that intersect the hyperplane.

The set of segments is
created by ``cutting'' the graph's edges as described above, and
maintains a mapping between these and their ``parent'' graph edge. The
$\knn$ query makes sure to only keep at most one ``child'' of each
edge in the heap containing the nearest neighbors, while still
checking children of an edge even if another child is already in the
heap.

Upon finishing the $\knn$ query with a set of c-space segments and the
points on these segments closest to $p$, the data structure uses the
mapping to get the corresponding set of edges and returns both the
edge IDs and the closest points to the \SBMP algorithm, so that it
could test for a connection/extension between them and the sampled
point $p$. Note that some of the edges might be degenerated, meaning
vertices of the $\cspace$ graph, but otherwise the edge is split in
two at the endpoint of the newly added edge.

The tree is parameterized by several quantities:
\begin{compactenumA}
    \item $n_{\mathrm{leaf\_thresh}}$ is the threshold for the number
    of segments that will trigger a further split (as opposed to
    making the node a leaf).

    \item $n_{\mathrm{buff}}$ is the size of the buffer for edge
    insertion. When the buffer is full the tree is rebuilt with all of
    the previous edges and the ones stored in the buffer.

    \item $n_{\mathrm{leaf\_ratio}}$ is the threshold ratio between
    the number of segments stored in the child node to that stored in
    the parent. If a child contains more than $n_{leaf\_ratio}$ times
    the number of segments in its parent it becomes a leaf. This
    parameter is crucial to deal with edge cases in which a set of
    segments cannot be even somewhat evenly split by any axis-parallel
    hyperplane.
\end{compactenumA}
The tree also
supports deletions using a deletion buffer that prevents deleted
segments from being considered as neighbors and from being included
when the tree is rebuilt.


\section{\cobweb: Motion planning algorithm}

\subsection{The cobweb algorithm}
\seclab{cobweb}

The exploration bias of the edge-\nn neighborhood finder provides us
with a simple heuristic for finding narrow passages in c-space using
extensions. In the presence of a narrow passage, \RRT requires a
vertex of the tree relatively close to the passage, and a sample in
the ``right'' direction that would lead to an extension of an edge
into the passage.

However, as illustrated in \figref{cobweb:vertex:extend}, the vertices
of the tree might not be situated in ``useful'' locations.  Namely,
the region from which a sample must be drawn can be of a tiny
measure. When using edge-\nn, an edge of the tree shooting across the
narrow passage is enough to give rise to a set of bigger measure of
beneficial samples, as \figref{cobweb:edge:extend} illustrates.

We propose the Cobweb algorithm (\cobweb), a variant of \RRG designed
to take advantage of the properties of edge-\nn. \cobweb mimics \RRT
in open regions of c-space, and span webs of edges across regions of
\emph{contact-space}, the boundary between free-space and
obstacle-space, the subsets of c-space that form the natural partition
to valid and invalid configurations respectively, where the openings
of narrow passages may exist.  See \figref{cobweb:intuition} for an
illustration of the intuition -- the idea is to create a net-like
structure (i.e., cobweb) that can cover the entry of a (potentially
narrow) tunnel. Then, a sample that leads from the net into the tunnel
would facilitate finding a solution (hopefully quickly).

\begin{figure*}[!t]
    \centering \subfloat[Extensions with vertex-\nn]{%
       \includegraphics[width=0.3\linewidth, page=14]%
       {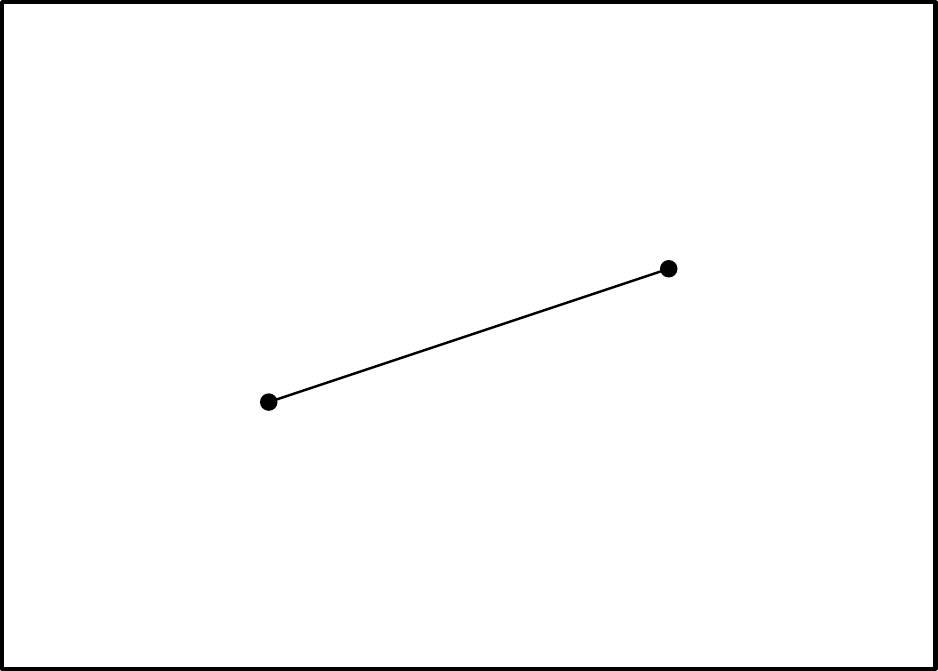}%
       \figlab{cobweb:vertex:extend}} \hfil \subfloat[Extensions with
    edge-\nn]%
    {\includegraphics[width=0.3\linewidth, page=18]%
       {extension_cases.pdf}%
       \figlab{cobweb:edge:extend}} \hfil \subfloat[Intuition for
    \cobweb]{\includegraphics[width=0.3\linewidth]{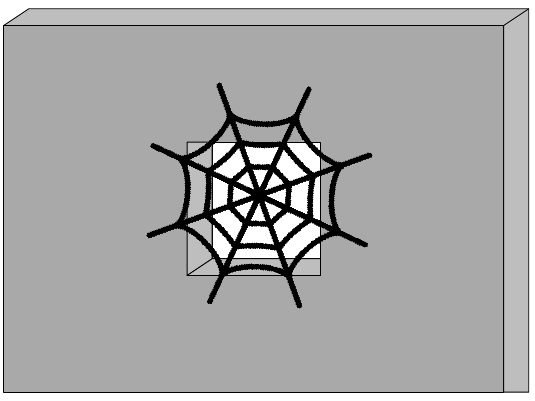}%
       \figlab{cobweb:intuition}}
    \caption{Illustration of intuition for \cobweb. The edge $uv$ and
       the environment in (a) and (b) are identical, and in each one
       we denote in red, blue, and green the Voronoi regions of the
       vertices and the interior of the edge respectively, and in
       darker shades the subsets of samples that are ``useful'' in the
       effort of finding the passage. Several examples of successful
       and failed extensions are also depicted. In (c) we give an
       illustration of the intuition behind the algorithm as described
       in \secref{cobweb}.}
    \figlab{cobweb}
\end{figure*}

\subsection{Algorithm}
The pseudocode for the \cobweb algorithm is shown in
\algref{cobweb}. The algorithm mimics \RRT while maintaining $P_{CS}$,
the set of contact-space points add to the graph so far, and iterates
over a simple loop of sampling and extending. We specify that the
neighborhood finder used in \lineref{cobweb:n:n} is edge-\nn since
this algorithm was specifically designed to take advantage of its
properties, while an \RRT can use many different subroutines without
forfeiting its power. After every successful extension
(\lineref{cobweb:n:n}), we check if the new endpoint lies in
contact-space (\lineref{cobweb:cs:check}). In our implementation, an
extension that stops before reaching the maximum allowed extension
length is immediately marked as a contact-space point. If the new
vertex is in contact space, we call a $\mathtt{connect}$ subroutine,
which \PRM{}'s equivalent of the $\mathtt{extend}$ operation, which
connects a configuration to a set of candidates according to some
heuristic. In this case we do not require that the connector used is
edge-\knn, as it is not integral to the algorithm's approach.  the
algorithm finishes, as most \SBMP algorithms do, when the goal
configuration is successfully added to the graph, and returns an
$(s,t)$-path found by some \SSSP algorithm (\lineref{cobweb:s:s:s:p}).

\begin{algorithm}[t]
    \caption{\cobweb}
    \alglab{cobweb}%
    \SetAlgoLined%
    \SetKwInOut{Input}{input} \Input{Robot $r$, Environment $E$,
       Configurations $(s,t)$}
    $G \gets (\{s\}, \emptyset)$ \Comment{Initialize roadmap}\\
    $P_{CS} \gets \emptyset$ \Comment{The set of contact space points}\\
    \While{$t \not\in G$}{
       $p \gets \mathtt{sample(r,E)}$\\
       $q\gets \mathtt{extend}(edge$-$\nn_{G}(p),p)$ \linelab{cobweb:n:n}\\
       \If{$\mathtt{is\_contact\_space}(q)$
          \linelab{cobweb:cs:check}}{ $G.\mathtt{connect}(q,
          P_{CS})$ %
          \Comment{connects to \knn{} from candidate set $P_{CS}$}\\
          $P_{CS} \gets P_{CS} \cup \{q\}$ } }
    $res \gets \mathtt{SSSP}(G, s, t)$ \linelab{cobweb:s:s:s:p}\\
    \textbf{return} $res$
\end{algorithm}

\section{Theoretical analysis}
\seclab{theoretical:analysis}

The intuition regarding the use edge nearest neighbors is
straightforward. Choosing the nearest neighbor from a much larger set
of candidates, in this case a large set of intermediate points on the
edges, will result in shorter edges, which in turn leads to less CD
calls and faster runtimes. See \figref{voronoi:diff} for a 2D
illustration. For $n\longrightarrow \infty$, $n$ being the number of
sampled configurations, this intuition is true in any motion planning
scenario, but for a given $n$ we cannot prove a catch-all lemma
guaranteeing a reduction of overall edge lengths due to pathological
cases, e.g. a c-space consisting of a narrow 2-dimensional
annulus. Instead, we prove a lemma that formalizes the aforementioned
intuition and captures many reasonable motion planning scenarios.

Let $B$ be the $d$-dimensional box, and let $P\subseteq B$ be a
sequence $(p_1,...,p_n)$ of $n$ points chosen uniformly at random from
$B$. Consider \algref{build:geom:tree}, a simple algorithm
for the creation of a geometric tree using $P$ by iteratively adding
points and connecting them to the tree.

\begin{algorithm}[t]
    \caption{Build Geometric Tree}
    \alglab{build:geom:tree}
    \SetAlgoLined%
    \SetKwInOut{Input}{input} \Input{Point set
       $P = \{p_i\}_{i=1}^n \subseteq \R^d$, Function
       $\mathtt{connect}\pth{\cdot, \cdot }$}
    $n \gets |P|$\\
    $V \gets \{\}$, $E \gets \{\}$,
    $T \gets (V, E)$\\
    \For{$i \gets 1 ... n$}%
    {%
       $e \gets \mathtt{connect}\pth{p_i, T }$\\
       $V \gets V \cup \{p_i\}$\\
       $E \gets E \cup \{e\}$ }

    return $T$
\end{algorithm}

Let $T$ be the tree returned by \algref{build:geom:tree} for the input
$P$, $\mathtt{vertexNN}(\cdot, \cdot)$, where the latter a function
that connects a point to its nearest tree vertex, and let $T'$ be the
tree returned for the input $P$, $\mathtt{treeNN}(\cdot, \cdot)$, the
latter being a function that connects a point to the nearest point in
the tree swath, i.e. a vertex or any point on an edge.

We denote the expected sum of edge lengths of a geometric graph $G$ by
$\lenX{G}$, and claim that, under certain conditions,
$\E{\lenX{T'}} \leq c_d\cdot \E{\lenX{T}}$ for some dimension
dependent constant factor $c_d < 1$. In other words the expected
``length'' of $T$ is larger than that of $T'$ by a constant factor
that depends on the dimension $d$.

The condition we use here is likely stronger than the weakest possible
condition, and it is that for any point $p_i\in P$, the direction of
the segment connecting $p_{i}$ to $T$ or $T'$ is uniformly
distributed. Since we also assume the operation $\mathtt{connect}$ is
always successful these settings are obviously not realistic, but at
the same time assuming that directions of connection/extension
attempts of \SBMP algorithms are distributed somewhat uniformly, even
in the presence of obstacles, seems reasonable.

We now state and prove the lemma.

\begin{lemma}
    \lemlab{tree:expected:length}%
    Let $P=\{p_1,...,p_n\}$ be a set of points sampled uniformly at
    random from $B$, let $T$ and $T'$ be the output of
    \algref{build:geom:tree} on the inputs
    $\pth{P, \mathtt{vertexNN}(\cdot, \cdot)}$, and
    $\pth{P,\mathtt{treeNN}(\cdot, \cdot)}$ respectively, and assume
    that for any point $p_i\in P$ the direction of the segment
    connecting $p_{i}$ to $T$ or $T'$ is uniformly distributed. Then
    there exists a constant $c_d < 1 $ such that
    $\Ex{\lenX{T'}} \leq c_d\Ex{\lenX{T}}$.
\end{lemma}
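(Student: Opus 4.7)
My plan is to decompose $\lenX{T} = \sum_{i=1}^n X_i$ and $\lenX{T'} = \sum_{i=1}^n Y_i$, where $X_i$ and $Y_i$ are the lengths of the edges added at the $i$-th iteration of \algref{build:geom:tree} in the two processes, and then prove iteration-by-iteration that $\Ex{Y_i} \leq c_d\,\Ex{X_i}$ for some dimension-dependent constant $c_d < 1$. Summing over $i$ and invoking linearity of expectation then yields the claim.

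For a single iteration $i \geq 2$, let $q = p_{j^*}$ be the nearest vertex to $p_i$ among $V(T_{i-1}) = \set{p_1, \ldots, p_{i-1}}$, so $X_i = \dY{p_i}{q}$. Since $\set{p_1, \ldots, p_{i-1}}$ is already contained in the swath of $T'_{i-1}$, one has $Y_i \leq X_i$ unconditionally. To obtain a strict constant-factor improvement I would exploit the fact that in $T'_{i-1}$ the vertex $q$ is incident to at least one edge $e$ of some positive length $L$ (for instance, the edge added when $q$ itself was inserted into $T'$), so that $Y_i \leq d(p_i, e)$, writing $d(\cdot,\cdot)$ for Euclidean point-to-segment distance. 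Placing coordinates with $q$ at the origin and $e$ along the positive $x_1$-axis, and letting $\alpha$ denote the angle between $p_i - q$ and the direction of $e$, one has $d(p_i, e) = X_i \sin\alpha$ whenever the perpendicular foot from $p_i$ onto the supporting line of $e$ lies on $e$ itself, and $d(p_i, e) \leq X_i$ in all cases.

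By the uniform-direction assumption, $\alpha$ has density proportional to $\sin^{d-2}\alpha$ on $[0,\pi]$, so the conditional expectation $c'_d := \Ex{\sin\alpha \mid \alpha \in [0, \pi/2]}$ is a constant strictly less than $1$ (for example, $c'_2 = 2/\pi$ and $c'_3 = \pi/4$). When $L \geq X_i$, the favorable angular range on which $d(p_i, e) = X_i \sin\alpha$ applies is exactly the hemisphere $\alpha \in [0,\pi/2]$, while the complementary hemisphere contributes at most the trivial bound $d(p_i, e) \leq X_i$; combining these yields
\begin{equation*}
    \Ex{Y_i \mid T'_{i-1}} \;\leq\; \tfrac{1}{2}\bigl(c'_d + 1\bigr)\, X_i \;=:\; c_d \, X_i,
\end{equation*}
with $c_d < 1$, so that averaging over $T'_{i-1}$ gives $\Ex{Y_i} \leq c_d \Ex{X_i}$ in this regime.

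The main obstacle is the regime $L < X_i$, where the perpendicular foot from $p_i$ can land past $e$'s far endpoint and the sharp $X_i \sin\alpha$ bound fails on part of the hemisphere. To handle this I would either (a) select the longest edge incident to $q$ in $T'_{i-1}$ rather than an arbitrary one (its length being typically comparable to $X_i$, since both are governed by the same local point density), (b) replace the sharp bound by the weaker $d(p_i, e) \leq \sqrt{X_i^2 - 2LX_i \cos\alpha + L^2}$ on the angular range where the foot falls past $e$ and integrate, or (c) simply absorb any residual short-edge contribution into a slightly larger $c_d$ that remains strictly less than $1$. In any case, the resulting $c_d$ depends on $d$ and tends to $1$ as $d \to \infty$, matching the intuition that the edge-versus-vertex improvement fades under the curse of dimensionality. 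Summing $\Ex{Y_i} \leq c_d\, \Ex{X_i}$ over $i$ then gives $\Ex{\lenX{T'}} \leq c_d\, \Ex{\lenX{T}}$ as required.
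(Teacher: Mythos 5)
Your decomposition, coordinate normalization, and spherical integral match the paper's proof almost exactly: the paper also conditions on the nearest vertex $p_j$, looks at the edge $p_jp_l$ attached when $p_j$ entered $T'$, places $p_j$ at the origin with $p_i$ at the north pole, and computes the expected point-to-segment distance by integrating the perpendicular distance $\sqrt{1-x_d^2}$ over the hemisphere facing $p_i$ and contributing $1/2$ from the opposite hemisphere. Your $\tfrac12\pth{c'_d+1}$ is precisely the paper's $\Delta = F + \tfrac12$, which it then bounds by $1 - 1/(6k)$ for $d = 2k+1$.

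The regime $L < X_i$ that you flag is the genuine technical crux of the argument, and you leave it open: none of your three suggested workarounds is actually carried through. Options (a) and (c) both presuppose exactly what must be proved, namely that $L \geq X_i$ (or that $p_jp_l$ is at least as long as $p_ip_j$) holds with probability bounded away from zero. The paper discharges this in a dedicated lemma, \lemref{old:edge:is:longer}, whose mechanism you do not anticipate: it first pins down the scale $\Ex{\lenX{\ell_i}} = \Theta\pth{i^{-1/d}}$ via \lemref{expected:length:of:greedy:tree}, then stratifies on the event that the nearest-vertex index $j$ satisfies $j < \const_{old}\cdot i$ (an event of constant probability), and on that event uses a Minkowski-sum volume bound on $T'_{j-1}\oplus \BallX{r}$ together with Markov's inequality to conclude that, with probability at least $1/2$, $p_j$ lies farther than $4\const_1 i^{-1/d} \geq \lenX{\ell_i}$ from $T'_{j-1}$. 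It is this scale separation --- edges attached at time $j \ll i$ are typically of length order $j^{-1/d}$, dwarfing $i^{-1/d}$ --- that makes the incident-edge length large enough with constant probability; without it, your ``typically comparable'' in (a) has no justification, and the absorption in (c) could be vacuous if the short-edge event occurred with probability $1-o(1)$.
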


\begin{proof}
    Let $\ell_i$ and $\ell'_i$ be the segments added to $T$ and $T'$
    respectively in the $i$\th iteration of
    \algref{build:geom:tree}. We will show that
    $\Ex{\lenX{\ell'_i}} \leq c_d\cdot \Ex{\lenX{\ell_i}}$ where by
    some abuse of notation $\lenX{\cdot}$ is the length of segment
    operator. This immediately implies the claim of the lemma.

    We first define some terms and notations. We denote the
    set $\{p_1,...,p_k\}\subseteq P$ by $P_k$, and the induced trees
    $T[P_k], T'[P_k]$ created by the algorithm after $k$ iterations by
    $T_k$ and $T'_k$ respectively. Let $p_j = \nn_{P_{i-1}}(p_i)$.

    Note that since $T'_k$ considers a superset of $P_{k-1}$ as
    nearest neighbor candidates when connecting $p_i$ to the tree, we
    have that $\lenX{\ell_k} \leq \lenX{\ell'_k}$ for every
    $k \in \IRX{n}$. One of the cases in which we have a strong
    inequality is when some edge $p_jp_l$ incident to $p_j$ has its
    other vertex in the general direction of $p_i$, which makes the
    distance from $p_i$ to that edge shorter than $\ell_i$.

    Consider the $d$-dimensional sphere centered at $p_j$. $p_j$ has
    at least one neighbor in $T'_j$, namely $\nn_{T'_{j-1}}(p_j),$
    which we denote $p_l$. We can show that with some constant
    probability $\lenX{p_jp_l} \geq \lenX{p_jp_i}$ (see
    \lemref{old:edge:is:longer} for full details), and since by our
    assumption the direction of the vector $\vv{p_l - p_j}$ is
    uniformly distributed, we can compute a bound on the expected
    length of $\ell'_i$ by a mostly straightforward computation of the
    expected value $\int_{p\in \VSphereX{d}} f(p)\dY{p_i}{p_jp}dp$
    where $\VSphereX{d}$ is the $d$-dimensional unit sphere, and
    $\dY{p_i}{p_jp} = \min_{q\in p_jp}\dY{p_i}{q}$.

    We assume without loss of generality that $p_j$ is the origin,
    $\lenX{\ell_i} = 1$, and $p_i$ is the ``north pole''
    $(\overbrace{0, 0, ... 0}^{d-1}, 1)$. See \figref{sphere:sweep}
    for an illustration.

    Let $\alpha_d$ be the surface area of $\VSphereX{d}$. Observe that
    the surface area of a ball of radius $r$ is $\alpha_dr^{d-1}$. It
    is well known that we have the formula
    \begin{align*}
      \alpha_d
      =%
      \alpha_{d-1} \int_{y=-1}^1       \pth{ \sqrt{1-y^2} }^{d-2} \dX{y}
      \qquad\qquad%
      \\%
      \qquad\quad\implies\quad
      \int_{y=0}^1       \pth{ \sqrt{1-y^2} }^{d-2} \dX{y} =
      \frac{\alpha_d}{2 \alpha_{d-1}}.
    \end{align*}
    Which, intuitively adds the ``slices'' of a sphere, which are
    lower dimensional spheres, to compute its surface or volume. The
    red sweep line in \figref{sphere:sweep} defines a ``slice'' of the
    2D sphere which is a 1D sphere of length $\sqrt{1-y^2}$. If a
    point $p$ on the sphere has negative $x_d$ coordinate, then
    $\lenX{\ell'_i}$ is $1$, but otherwise, we have that the average
    distance is
    \begin{align*}
      F
      =%
        \int_{x_d=0}^{1}
        \sqrt{1-x_d^2}
        \alpha_{d-1}
        \cdot \pth{ \sqrt{1-x_d^2} }^{d-2}
        \frac{1}{\alpha_d}
        \dX{x_d}
      \\
      =%
      \frac{\alpha_{d-1}}{\alpha_d}
      \int_{x_d=0}^{1}
      \pth{ \sqrt{1-x_d^2} }^{d-1}
      \dX{x_d}
      \\
      =%
      \frac{\alpha_{d-1}}{\alpha_d}
      \cdot
      \frac{\alpha_{d+1}}{2\alpha_{d}}
      =%
      \frac{\alpha_{d-1}\alpha_{d+1}}{2\alpha_d^2}.\qquad
    \end{align*}
    since $\sqrt{1-x_d^2}$ is the distance from $p_i$ to a segment
    created by the slice with distance $x_d$ from $p_j$, see the blue
    segment marked $x$ in \figref{sphere:sweep}, and
    $\alpha_{d-1}\cdot\pth{ \sqrt{1-x_d^2} }^{d-2} \frac{1}{\alpha_d}$
    is the measure of the sphere "slice" normalized as a fraction of
    the whole sphere.

    Thus, the overall expected distance of $p_i$ to a random segment
    $p_jp$ such that $p$ on the sphere is $\Delta = F+1/2$.  It is
    known that
    \begin{math}
        \alpha_d = \frac{2\pi^{d/2}}{\Gamma(d/2)},
    \end{math}
    where
    \begin{align*}
        \Gamma \pth{{\tfrac{d}{2}}}
        =%
        \begin{cases}
          \sqrt {\pi } \frac {(d-2)!!}{2^{(d-1)/2} }
          & d \text{ is odd}\\
          (\tfrac{d}{2}-1)! & d \text{ is even}.
        \end{cases}
    \end{align*}

    Thus, if $d=2k+1$ is odd, we have

    \begin{align*}
      F
      =%
      \frac{\alpha_{d-1}\alpha_{d+1}}{2\alpha_d^2}
      =%
      \frac{\Gamma(d/2)^2}{2\Gamma(\frac{d-1}{2})\Gamma(\frac{d+1}{2})}
      &\\%
      =%
      \pth{\sqrt {\pi } \frac {(2k-1)!!}{2^{k} }}^2
      &\frac{1}{2(k-1)!k!}.%
    \end{align*}

    By the definition of the double factorial, we have

    \begin{equation*}
        (2k-1)!!
        =%
        \frac{(2k)!}{2^kk!}.
    \end{equation*}

    And thus, we have

    \begin{align*}
        F
        =%
        \frac{\pi}{2^{2k+1}} \pth{
           \frac{(2k)!}{2^kk!}}^2
        \frac{1}{(k-1)!k!}.
        =%
        \frac{\pi k}{2^{4k+1}} \pth{
           \frac{(2k)!}{k!k!}}^2
        \\%
        \qquad\qquad=%
        \frac{\pi k}{2^{4k+1}} \binom{2k}{k}^2.
    \end{align*}

    it is known that
    \begin{math}
        \binom{2k}{k} \leq%
        {\frac {2^{2k}}{\sqrt {\pi k}}}\pth{1-{\frac {1}{7k}}}.
    \end{math}
    And thus, we have
    \begin{equation*}
        F
        \leq
        \frac{\pi k}{2^{4k+1}}
        \pth{\frac{2^{2k}}{\sqrt{\pi k}}}
        \pth{1-{\frac {1}{7k}}}^2
        \leq%
        \frac{1}{2}\pth{1-\frac{1}{3k}}
    \end{equation*}

    This readily implies that $\Delta = F + 1/2 \leq 1-1/6k$. The case
    where $d$ is even is similar.

    This implies that with constant probability
    $\lenX{\ell'_i} \leq c_d \cdot \lenX{\ell_i}$ for some dimension
    dependent constant $c_d < 1$, and since
    $\lenX{\ell'_i} \leq \lenX{\ell_i}$ we have that
    $\Ex{\lenX{\ell'_i}} \leq c_d \cdot \Ex{\lenX{\ell_i}}$ which
    immediately implies $\Ex{\lenX{T'}} \leq c_d \cdot \Ex{\lenX{T}}$.
\end{proof}

\begin{figure}[t!]
    \centering%
    \includegraphics[width=0.8\linewidth, page = 2]%
    {sphere_sweep}%
    \caption{Illustration of the proof of \lemref{tree:expected:length}}
    \figlab{sphere:sweep}
\end{figure}

\section{Experiments}
\seclab{experiments}

In this section we provide experimental results supporting the
claims that using the Edge-$\knn$ neighborhood finder results in
shorter edges, and provides improved exploration properties when
used by an \RRT search in \cspace. The experiments are all run
in simulation, and include mobile robots (simple 3 and 6 \DOF
polyhedral robots) and fixed-base manipulators (7 \DOF
manipulator with revolute joints).

In \secref{empty:experiment} we compare roadmaps created by
\RRT and \PRM algorithms on empty environments using our neighborhood
finder compared with a common vertex-\nn{} one in order to provide
experimental verification to the intuition and theoretic analysis
regarding the overall length of c-space graphs.

In \secref{mp:experiment} we test the exploration properties of \RRT
in several motion planning scenarios and compare its effectiveness
when using vertex-\nn{} and edge-\nn{} neighborhood finders for
solving motion planning tasks.

\emph{An important note:} In each of the experimental settings where
an algorithm was run $N$ times with vertex-\nn{} and $N$ times with
edge-\nn{} using the same parameters, we randomly generated a list of
$N$ seeds that were then used for both sequences of runs. The seeds
are used in the code when generating random samples, and thus the
comparisons between the two neighborhood finders can be seen as a
collection of head-to-head comparisons of the algorithms when given
the exact same set of samples during runtime.

Also, in the bar plots showcasing the results of the experiments
(\figref{mp:exp:z:passage:results}, \figref{mp:exp:clutter:results},
\figref{mp:exp:7dof:manip:results}) some outliers were removed from
the visualization for the sake of image clarity. \textbf{These values
   were not removed from the computations of the average, mean,
   standard deviation, min, and max values}, and all of the data are
stored by the authors.

\subsection{Roadmap length}
\seclab{empty:experiment}

In this experiment we run \RRT and \PRM with four different robots in
empty environments, and report the number of CD checks performed in
order to construct and validate the graph and the overall edge
length. This is despite the fact that these two quantities are closely
correlated in empty environments as almost no edges are invalid,
meaning that most CD calls are translated directly to edge length.

For every combination of robot-environment and $k$ we ran \RRT/\PRM
100 times with each neighborhood finder.

A summary of the results can be found in \tabref{empty:experiment}.

\begin{table*}[t!]
    \begin{center}
        \begin{NiceTabular}{|c|c|c|c|c|c|c|}
\CodeBefore
  \rectanglecolor{lightgray}{3-2}{4-2}
  \rectanglecolor{lightergray}{3-3}{3-7}
  \rectanglecolor{lightgray}{4-3}{4-7}
  \rectanglecolor{lightblue}{5-2}{6-2}
  \rectanglecolor{lighterblue}{5-3}{5-7}
  \rectanglecolor{lightblue}{6-3}{6-7}
  \rectanglecolor{lightgray}{7-2}{8-2}
  \rectanglecolor{lightergray}{7-3}{7-7}
  \rectanglecolor{lightgray}{8-3}{8-7}
  \rectanglecolor{lightblue}{9-2}{10-2}
  \rectanglecolor{lighterblue}{9-3}{9-7}
  \rectanglecolor{lightblue}{10-3}{10-7}
  \rectanglecolor{lightgray}{11-2}{12-2}
  \rectanglecolor{lightergray}{11-3}{11-7}
  \rectanglecolor{lightgray}{12-3}{12-7}
  \rectanglecolor{lightblue}{13-2}{14-2}
  \rectanglecolor{lighterblue}{13-3}{13-7}
  \rectanglecolor{lightblue}{14-3}{14-7}
\Body
\hline
\multirow{2}{*}{$k$} & \multirow{2}{*}{Metric (Avg.)} & \multirow{2}{*}{NF} & \multicolumn{4}{c|}{\DOF composition} \\
\cline{4-7}
 & & & 3Pos.+0Rot. & 3Pos.+3Rot. & 7 Revolute & 10 Revolute \\
\hline
\multirow{4}{*}{1} & \multirow{2}{*}{\#{}CD calls} & Edge-NN & $7.10 \times 10^5$ & $7.04 \times 10^5$ & 7561 & 10326 \\
 & & Vertex-NN & $7.23 \times 10^5$ & $7.17 \times 10^5$ & 7853 & 10242 \\
\cline{2-7}
 & \multirow{2}{*}{Length} & Edge-NN & 1773 & 1808 & 341 & 486 \\
 & & Vertex-NN & 1805 & 1837 & 356 & 485 \\
\hline
\multirow{4}{*}{3} & \multirow{2}{*}{\#{}CD calls} & Edge-NN & $1.73 \times 10^7$ & $1.65 \times 10^7$ & $1.58 \times 10^5$ & $2.47 \times 10^5$ \\
 & & Vertex-NN & $2.48 \times 10^7$ & $2.47 \times 10^7$ & $1.84 \times 10^5$ & $2.70 \times 10^5$ \\
\cline{2-7}
 & \multirow{2}{*}{Length} & Edge-NN & 21613 & 22267 & 4064 & 6257 \\
 & & Vertex-NN & 30969 & 32016 & 32016 & 6804 \\
\hline
\multirow{4}{*}{5} & \multirow{2}{*}{\#{}CD calls} & Edge-NN & $2.90 \times 10^7$ & $2.76 \times 10^7$ & $2.63 \times 10^5$ & $4.12 \times 10^5$ \\
 & & Vertex-NN & $4.79 \times 10^7$ & $4.78 \times 10^7$ & $3.26 \times 10^5$ & $4.66 \times 10^5$ \\
\cline{2-7}
 & \multirow{2}{*}{Length} & Edge-NN & 36233 & 37139 & 6809 & 10475 \\
 & & Vertex-NN & 59945 & 61357 & 8353 & 11747 \\
\hline
\end{NiceTabular}

    \end{center}
    \caption{Roadmap length experiment results. Results are all
       averages, and were rounded down to the nearest integer. The
       variance in the results was minor and thus the averages contain
       all relevant information. $k$ is the $\knn$ parameter, meaning
       that the first 4 lines contain the results for \RRT runs, and
       the rest of the table contains results of \PRM runs. NF stands
       for Neighborhood Finder, indicating the subroutine
       used. Columns 3 and 4 contain results of experiments with
       mobile robots, and columns 5 and 6 of experiments with
       manipulators. }
    \tablab{empty:experiment}
\end{table*}

\subsection{Exploration quality}
\seclab{mp:experiment}

In this experiment we run \RRT on four motion planning
instances. Three instances, the simple passage, $z$-shaped passage,
and cluttered space, are for a 6\DOF{} mobile robot, and the fourth
instance is for a 7\DOF manipulator arm.  We solve every instance
using either edge-\nn{} or vertex-\nn{} as a neighborhood finder and
report results of three metrics, the number of CD calls, runtime, and
number of \RRT iterations required to solve the task.

In the rest of this section we describe each of the motion planning
tasks in greater detail, and provide important information on the
settings in which the experiments were conducted. The results are
summarized in \tabref{mp:experiment}.

\subsection{Motion planning tasks}

\paragraph*{Simple passage:} %

An environment of size $10\times 10 \times 10$ divided by a wall
perpendicular to the $z$-axis with a single passage going through
it. The passage is a rectangular hole in the wall with a clearance of
$0.9$ at its center. The robot, a 6\DOF $2\times 1\times 1$
rectangular prism, can fit through the passage only if properly
oriented and moving in a specific direction. The motion planning task
requires the robot to get from a configuration on one side of the
environment to the other. See
\figref{mp:exp:simple:passage:illustration} for an illustration of the
scene.

\begin{figure*}[!t]
    \centering%
    \subfloat[]{\includegraphics[width=0.54\linewidth]%
       {\si{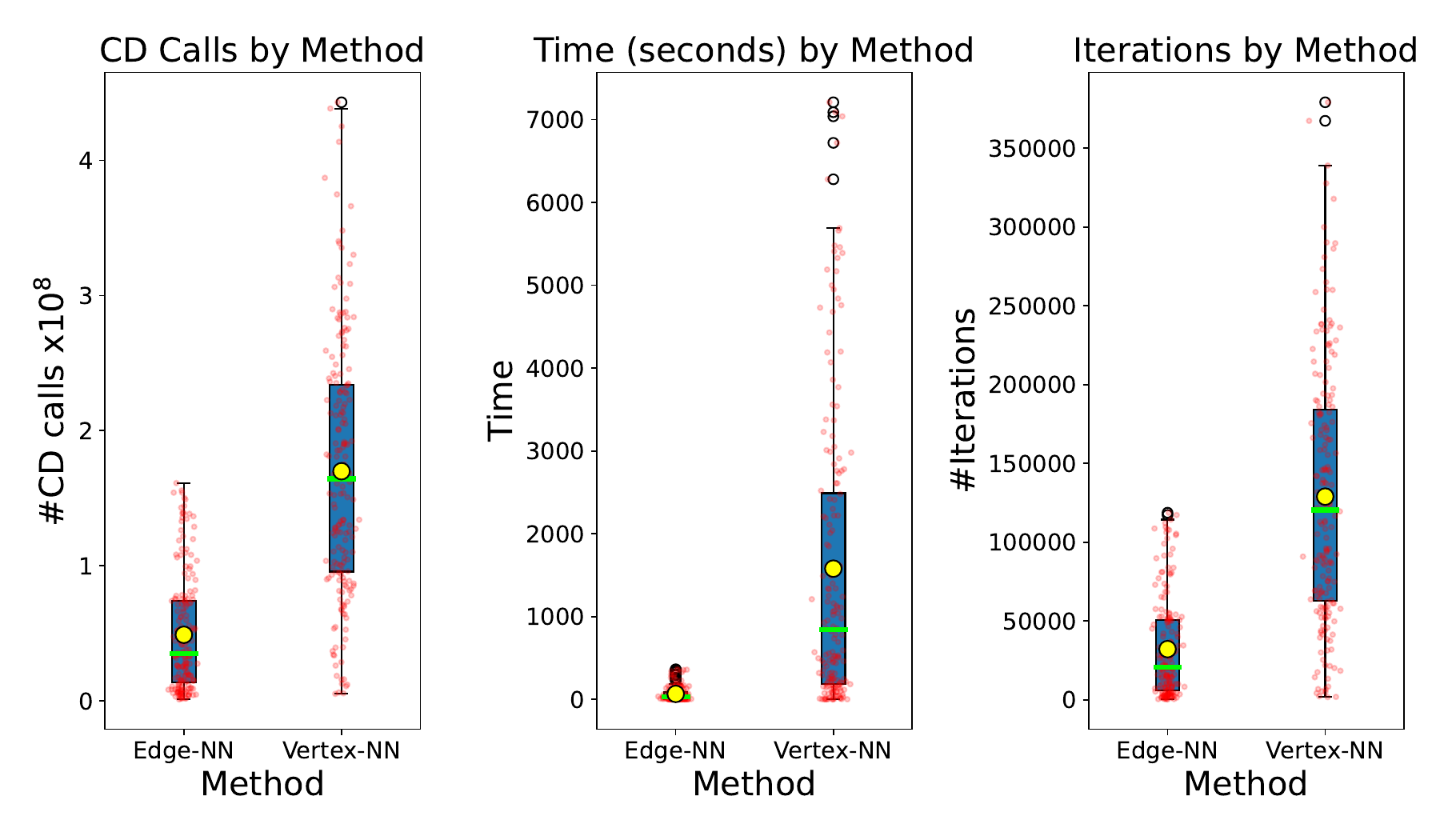}}%
       \figlab{mp:exp:simple:passage:results}}%
    \hfil%
    \subfloat[]%
    {\includegraphics[width=0.44\linewidth]%
       {\si{s_t_configs_simple_tunnel}}%
       \figlab{mp:exp:simple:passage:illustration}}
    \caption{In (a) we present the results of the simple passage
       motion planning experiment (200 \RRT runs). The green line and
       yellow marker emphasize the median and the mean
       respectively. In (b) we see an illustration of the simple
       passage environment with the robot halfway through the passage
       to showcase the tight fit.}
    \figlab{mp:exp:simple:passage}
\end{figure*}

\paragraph*{Z-passage:} %
An environment of size $20\times 20 \times 20$ similar in flavor to
the simple passage one. it too is divided by a wall perpendicular to
the $z$-axis with a single passage going through it. The passage is
rectangular with a clearance of $1.4$ along its medial axis, but it is
not simply a hole but rather a corridor.  Its opening and exit have
the same $y$-coordinates, but are shifted along the $x$-axis, thus
requiring two turns in opposite directions - right, and then left. The
robot is the same 6\DOF $2\times 1\times 1$ rectangular prism used in
the simple passage experiment, but it can fit through the $z$-passage
in any orientation if well centered. We originally considered a
tighter fitting passage, but the runtime, averaging several minutes
even in the current settings, quickly became unreasonable as we shrunk
the passage. The motion planning, similarly to the simple passage
task, requires the robot to cross the wall using the passage. See
\figref{mp:exp:z:passage:illustration} for an illustration of the
scene.

\begin{figure*}[!t]
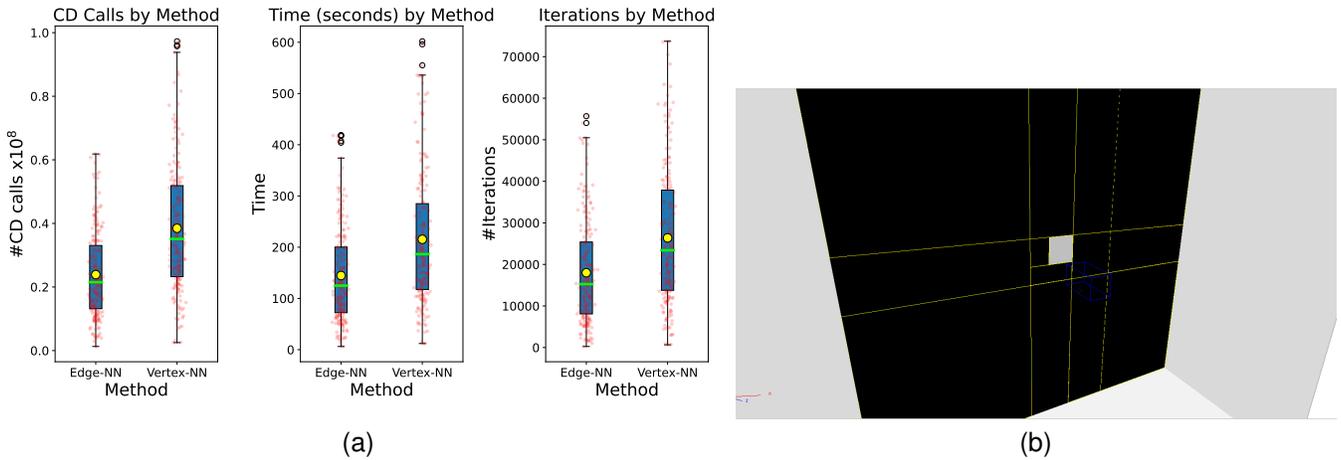

    \centering \subfloat[]{%
       \includegraphics[width=0.54\linewidth]%
       {\si{6dof_z_tunnel_exp_2}}%
       \figlab{mp:exp:z:passage:results}} \hfil%
    \subfloat[]{%
       \includegraphics[width=0.44\linewidth]%
       {\si{s_t_configs_z_tunnel}}%
       \figlab{mp:exp:z:passage:illustration}%
    }
    \caption{In (a) we present the results of the $z$-passage motion
       planning experiment (200 \RRT runs). The green line and yellow
       marker emphasize the median and the mean respectively. In (b)
       we see an illustration of the $z$-passage environment.}
    \figlab{mp:exp:z:passage}
\end{figure*}

\paragraph*{Clutter:}

A $10\times 10\times 10$ environment containing a 3D grid of
$5 \times 5 \times 5$ cubes of sidelength $0.8$ with centers on grid
vertices and arbitrary orientations. Two diagonally opposing cubes on
the $5 \times 5 \times 5$ grid are missing, and the motion planning
task requires the robot, a 6\DOF cube of sidelength $0.8$, to get from
the location of one missing cube to that of the diagonally opposing
missing cube. See \figref{mp:exp:clutter:illustration} for an
illustration of the scene.

\begin{figure*}[!t]
    \centering
    \subfloat[]{%
       \includegraphics[width=0.54\linewidth]%
       {\si{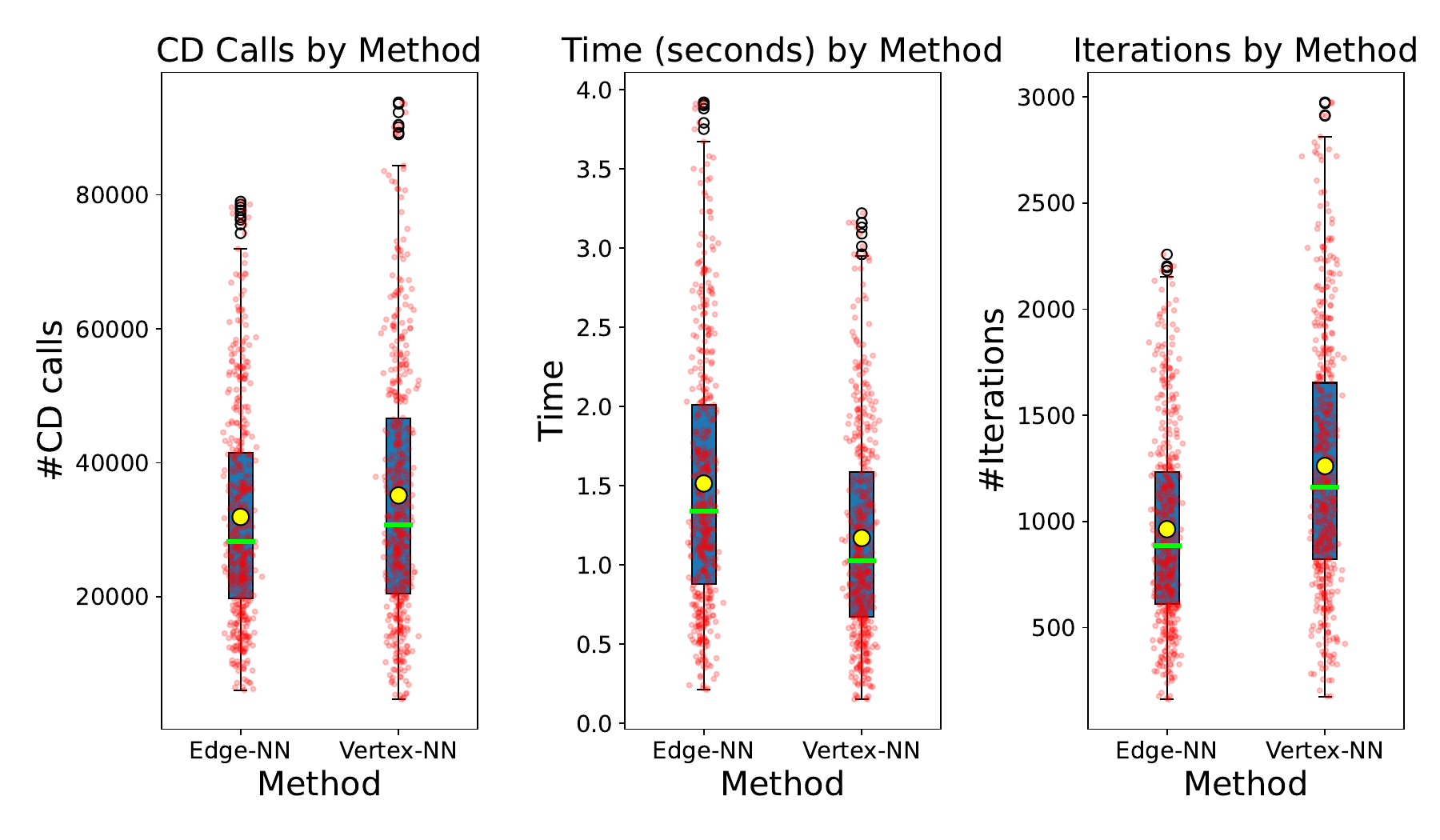}}%
       \figlab{mp:exp:clutter:results}} \hfil
    \subfloat[]{\includegraphics[width=0.44\linewidth]%
       {\si{s_t_configs_clutter}}%
       \figlab{mp:exp:clutter:illustration}}
    \caption{In (a) we present the results of the clutter motion
       planning experiment (500 \RRT runs). The green line and yellow
       marker emphasize the median and the mean respectively. In (b)
       we see an illustration of the clutter motion planning task.}
    \figlab{mp:exp:clutter}
\end{figure*}

\paragraph*{7\DOF manipulator:} %
A $50\times 50 \times 50$ environment with four rectangular pillars
perpendicular to the $xy$-plane and spanning from one side of the
environment to the other, and a 7\DOF manipulator with base fixed at
the center of the floor of the environment (at the center of the
$xy$-plane and with a $z$-coordinate of 0). The manipulator has 7
revolute joints and a conic end-effector. The motion planning task
requires the manipulator to move from a configuration entangled with
two of the pillars to one symmetrically entangled with the other two
pillars.  See \figref{mp:exp:7dof:manip:illustration} for an
illustration of the scene including the start and goal configurations.

\begin{figure*}[!t]
    \centering
    \subfloat[]{\includegraphics[width=0.54\linewidth]%
       {\si{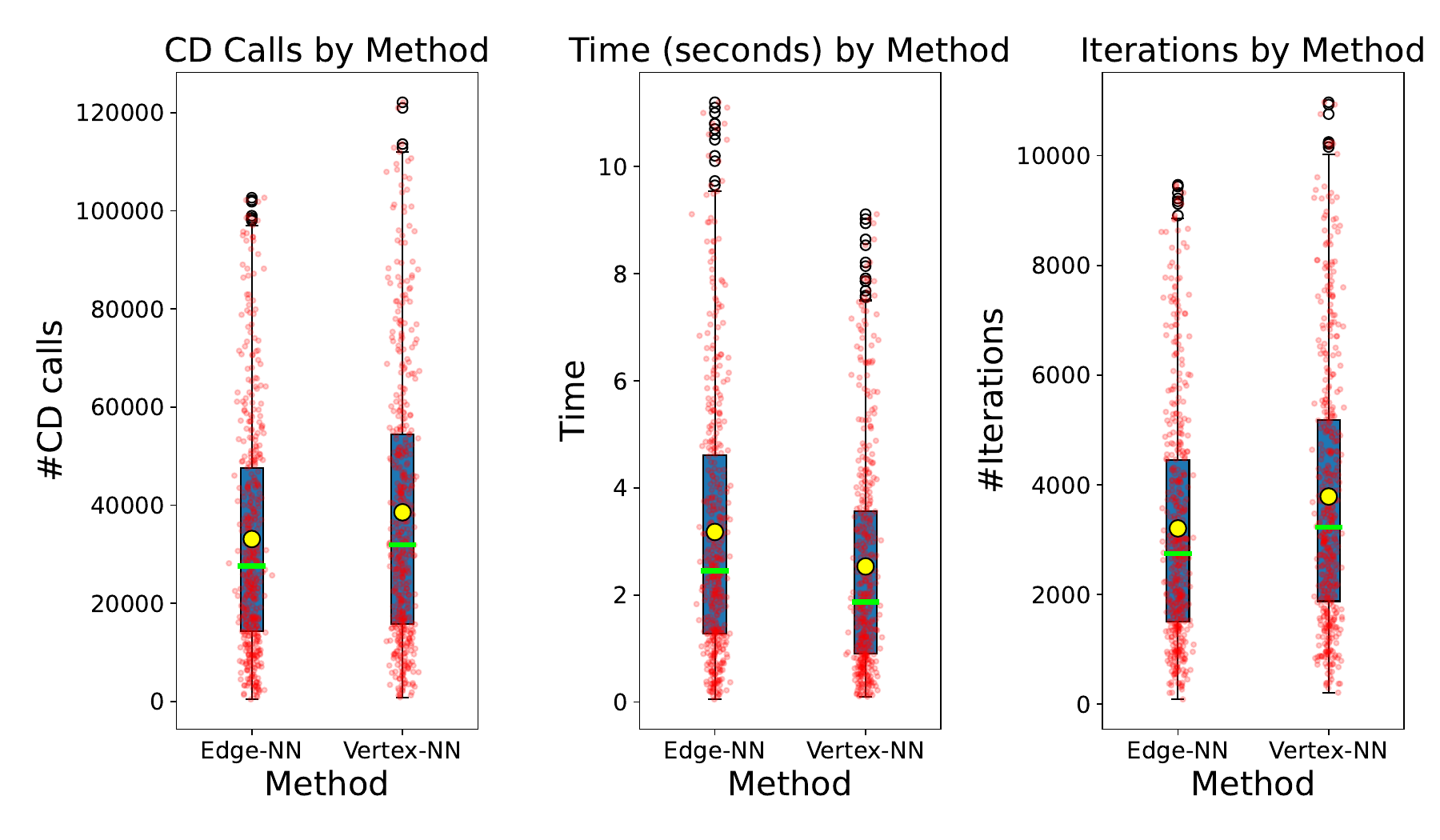}}%
       \figlab{mp:exp:7dof:manip:results}}
    \hfil
    \subfloat[]{\includegraphics[width=0.44\linewidth]%
       {\si{s_t_configs_7_dof}}%
       \figlab{mp:exp:7dof:manip:illustration}}
    \caption{In (a) we present the results of the 7\DOF{} manipulator
       motion planning experiment (500 \RRT runs). The green line and
       yellow marker emphasize the median and the mean
       respectively. In (b) we see an illustration of the 7\DOF{}
       manipulator motion planning task with the outlines of the start
       and goal configurations shown in blue and red respectively.}
    \figlab{mp:exp:7dof:manip}
\end{figure*}

The experiments included \RRT 200 runs (with each neighborhood finder)
on the simple passage and $z$-passage tasks, and 500 runs (with each
neighborhood finder) on the clutter and 7\DOF{} manipulator tasks.

\subsection{System and implementation details}

\paragraph*{Implementation details:} %

All of the experiment code relies on the C++ Parasol Planning Library
(\PPL) \RRT and \PRM implementations. \RRT was run with minimum and
maximum extension parameters set to 0.01 and 4.0 respectively, the
goal was sampled once in every 100 iterations, and an attempt to
connect the tree to the goal was also made whenever a node that was
added to the tree was within a Euclidean distance of 3.0 units from
the goal. \PRM sampled a maximum of 10 times in an attempt to add 5
nodes the roadmap in every iteration. Collision detection was done
using \si{GAMMA}'s \si{PQP} algorithm \cite{lglm-fpqss-99}.

\paragraph{System details:}

The simple passage and $z$-passage motion planning experiments were
run on a desktop using Intel i7-10700 CPU \@ 2.90GHz, with 32GB of
RAM.  All other experiments were run on a laptop using Intel i7-11800H
CPU \@2.30GHz, with 32GB of RAM.


\section{Discussion}
\seclab{discussion}

Unsurprisingly, the results of the roadmap length experiments
conducted in empty environments described in \secref{empty:experiment}
corroborate the simple intuition, and, to a certain degree, the
theoretic findings proven in \secref{theoretical:analysis}. The
variance in these experiments was low, meaning that the results shown
in \tabref{empty:experiment} truthfully represent the gap between
using vertex-\nn{} and edge-\nn{}. As expected, as the number of
\DOF{}s grows, the effect of using our algorithm diminishes, but when
using \PRM and as the number of connections ($k$) increases, the
advantage becomes well pronounced. A curious finding not shown in the
data, is that even though in every iteration of \RRT the neighbor
found with edge-\nn{} is closer to the sample than that found by
vertex-\nn{}, using the same seed of course and thus the same sample,
in many iterations the extension distance achieved using the former
neighborhood finder was greater.

The motion planning experiments described in \secref{mp:experiment}
contain several interesting results. First, we see that edge-\nn{}
provides superior exploration guidance in the presence of narrow
passages, where its advantage is so pronounced that it meaningfully
affects the runtime of \RRT regardless of the longer runtime required
by its \nn{} function. However, this advantage can be seen in all of
the experiments, and expresses itself also in the lower number of
iterations required to complete all of the motion planning tasks, and
this is - as a reminder, even though the different neighborhood
finders compete using the same random seeds. However, the cost of
using a more complex neighborhood finder is reflected in worse
runtimes for easier motion planning tasks in simple environments,
i.e. where the combinatorial complexity of the robot and the set of
obstacles is relatively low.

\begin{table*}[t!]
    \centering
    \begin{NiceTabular}{|c|c|c|c|c|c|c|c|c|c|c|}
\CodeBefore
    \rectanglecolor{lightgray}{2-2}{3-2}
    \rectanglecolor{lightergray}{2-3}{2-8}
    \rectanglecolor{lightgray}{3-2}{3-8}
    \rectanglecolor{lightblue}{4-2}{5-2}
    \rectanglecolor{lighterblue}{4-3}{4-8}
    \rectanglecolor{lightblue}{5-2}{5-8}
    \rectanglecolor{tan}{6-2}{7-2}
    \rectanglecolor{lighttan}{6-3}{6-8}
    \rectanglecolor{tan}{7-2}{7-8}
    \rectanglecolor{lightgray}{8-2}{9-2}
    \rectanglecolor{lightergray}{8-3}{8-8}
    \rectanglecolor{lightgray}{9-2}{9-8}
    \rectanglecolor{lightblue}{10-2}{11-2}
    \rectanglecolor{lighterblue}{10-3}{10-8}
    \rectanglecolor{lightblue}{11-2}{11-8}
    \rectanglecolor{tan}{12-2}{13-2}
    \rectanglecolor{lighttan}{12-3}{12-8}
    \rectanglecolor{tan}{13-2}{13-8}
    \rectanglecolor{lightgray}{14-2}{15-2}
    \rectanglecolor{lightergray}{14-3}{14-8}
    \rectanglecolor{lightgray}{15-2}{15-8}
    \rectanglecolor{lightblue}{16-2}{17-2}
    \rectanglecolor{lighterblue}{16-3}{16-8}
    \rectanglecolor{lightblue}{17-2}{17-8}
    \rectanglecolor{tan}{18-2}{19-2}
    \rectanglecolor{lighttan}{18-3}{18-8}
    \rectanglecolor{tan}{19-2}{19-8}
    \rectanglecolor{lightgray}{20-2}{21-2}
    \rectanglecolor{lightergray}{20-3}{20-8}
    \rectanglecolor{lightgray}{21-2}{21-8}
    \rectanglecolor{lightblue}{22-2}{23-2}
    \rectanglecolor{lighterblue}{22-3}{22-8}
    \rectanglecolor{lightblue}{23-2}{23-8}
    \rectanglecolor{tan}{24-2}{25-2}
    \rectanglecolor{lighttan}{24-3}{24-8}
    \rectanglecolor{tan}{25-2}{25-8}
\Body
\hline
Experiment & Metric & NF & Average & Median & STD & min & max \\
\hline
\multirow{6}{*}{\makecell{simple-passage\\ (200 runs)}} & \multirow{2}{*}{\#{}CD calls} & Edge-NN & $0.53 \times 10^6$ & $0.36 \times 10^6$ & $0.51 \times 10^6$ &  10467 & $2.8 \times 10^6$  \\
 & & Vertex-NN & $2.8 \times 10^6$ & $1.7 \times 10^6$ & $1.1 \times 10^6$ & 51705 & $5.5 \times 10^6$\\
\cline{2-8}
& \multirow{2}{*}{Runtime (sec.)} & Edge-NN & 207 & 39.1 & 536  & 0.17 & 4760 \\
 & & Vertex-NN & 3171 & 1070 & 5954 & 0.39 & 38300 \\
\cline{2-8}
& \multirow{2}{*}{\#{}Iterations} & Edge-NN & $0.38 \times 10^5$ & $0.22 \times 10^5$ & $0.43 \times 10^5$ & 221 & $2.5 \times 10^5$ \\
 & & Vertex-NN & $1.4 \times 10^5$  & $1.2 \times 10^5$ & $1.02 \times 10^5$ & 1673 & $5.2 \times 10^5$ \\
\hline
\multirow{6}{*}{\makecell{$z$-passage\\ (200 runs)}} & \multirow{2}{*}{\#{}CD calls} & Edge-NN & $2.52 \times 10^7$ & $2.17 \times 10^7$ & $1.68 \times 10^7$ & $1.28 \times 10^6$ & $1.37 \times 10^8$\\
 & & Vertex-NN & $4.08 \times 10^7$ & $3.58 \times 10^7$ & $2.40 \times 10^7$ & $2.50 \times 10^6$ & $1.33 \times 10^8$ \\
\cline{2-8}
& \multirow{2}{*}{Runtime (sec.)} & Edge-NN & 169 & 192 & 126 & 6.23 & 2290.0 \\
 & & Vertex-NN & 273 & 264 & 197 & 12 & 1890 \\
\cline{2-8}
& \multirow{2}{*}{\#{}Iterations} & Edge-NN & 19877 & 17170 & 15497 & 251 & $1.5 \times 10^5$\\
 & & Vertex-NN & 29819 & 22131 & 24283 & 654 & $1.25 \times 10^5$\\
\hline
\multirow{6}{*}{\makecell{clutter\\ (500 runs)}} & \multirow{2}{*}{\#{}CD calls} & Edge-NN & 34162 & 28948 & 19584 & 6027 & $1.16 \times 10^5$  \\
 & & Vertex-NN & 37598 & 31833 & 23329 & 4637 & $1.5 \times 10^5$ \\
\cline{2-8}
& \multirow{2}{*}{Runtime (sec.)} & Edge-NN & 1.66 & 1.05 & 1.37 & 0.21 & 6.45 \\
 & & Vertex-NN & 1.28 & 0.85 & 1.05 & 0.15 & 5.79 \\
\cline{2-8}
& \multirow{2}{*}{\#{}Iterations} & Edge-NN & 1041 & 572 & 914 & 161 & 3384 \\
 & & Vertex-NN & 1348 & 740 & 1186 & 175 & 5011 \\
\hline
\multirow{6}{*}{\makecell{manip.\\ (500 runs)}} & \multirow{2}{*}{\#{}CD calls} & Edge-NN & 38050 & 28848 & 33078 & 421 & $2.29 \times 10^5$  \\
 & & Vertex-NN & 44360 & 33901 & 37514 & 857 & $2.17 \times 10^5$ \\
\cline{2-8}
& \multirow{2}{*}{Runtime (sec.)} & Edge-NN & 4.31 & 2.66 & 5.17 & 0.05 & 50.1 \\
 & & Vertex-NN & 3.96 & 2.16 & 5.61 & 0.1 & 47.5 \\
\cline{2-8}
& \multirow{2}{*}{\#{}Iterations} & Edge-NN & 4031 & 2878 & 6102 & 88 & $1.16 \times 10^5$  \\
 & & Vertex-NN & 4489 & 3407 & 4764 & 212 & $0.76\times 10^5$ \\
\hline
\end{NiceTabular}

    \caption{Results for the three experiments described in
       \secref{mp:experiment}. All results are averages, and were either
       rounded down to the nearest integer or rounded to two decimal
       places with standard rounding.}
    \tablab{mp:experiment}
\end{table*}

\bibliographystyle{IEEEtran}
\bibliography{edge_nn}

\appendix

\section{Length of greedy spanning tree}

The following proofs complement the theoretical analysis in
\secref{theoretical:analysis} and therefore use the same notations.



\begin{lemma}
    \lemlab{old:edge:is:longer}%
    Let $P=\{p_1,...,p_n\}$ be a set of points sampled uniformly at
    random from $B$, $T'$ be the output of \algref{build:geom:tree} on
    the input $\pth{P, \mathtt{treeNN}(\cdot, \cdot)}$, $i\in\IRX{n}$
    be an integer chosen uniformly at random, and denote
    $p_j = \nn_{P_i}(p_i)$, and $p = \nn_{T'_j}(p_j)$. We have that
    $\lenX{p_jp} > \lenX{p_ip_j}$ with constant probability.
\end{lemma}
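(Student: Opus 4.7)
The plan is to combine three probabilistic ingredients so that, for $j$ suitably smaller than $i$, a lower bound on the tree nearest-neighbor distance $\lenX{p_j p}$ at iteration $j$ strictly exceeds an upper bound on the vertex nearest-neighbor distance $\lenX{p_i p_j}$ at iteration $i$. First I would observe that, since $P$ is iid uniform and $p_j$ is the nearest of the exchangeable points $p_1,\ldots,p_{i-1}$ to $p_i$, the index $j$ is uniform on $\IRX{i-1}$ conditional on $i$. Hence, for a large constant $K$ to be tuned later, the event ``$i\geq n/2$ and $j \leq i/K$'' has probability $\Omega(1/K)$, and on it $i \geq Kj$.

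Next, for the upper bound on $\lenX{p_i p_j}$, I would invoke the classical estimate that the nearest-neighbor distance of a uniform point among $i-1$ iid uniform points in $B$ has expectation $O(i^{-1/d})$; Markov's inequality then yields a constant $c_2$ such that $\Prob{\lenX{p_i p_j} \leq c_2 i^{-1/d}} \geq 1/2$.

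For the lower bound on $\lenX{p_j p}$, I would apply a tube/packing argument. Because the tree-NN edge inserted at step $k$ is no longer than the vertex-NN edge at that step, the total length of $T'_{j-1}$ is at most $\sum_{k=2}^{j-1} \lenX{\ell_k}$, whose expectation is $O(j^{1-1/d})$. The $r$-neighborhood of a union of segments in $B$ of total length $L$ has volume at most $O(L r^{d-1}) + O((j-2) r^d)$. Crucially, $p_j$ is independent of $T'_{j-1}$ and uniform in $B$, so the probability that $p_j$ falls within distance $r$ of $T'_{j-1}$ equals the expected tube volume; choosing $r = c_3 j^{-1/d}$ with $c_3$ small enough makes this at most $1/4$, yielding $\lenX{p_j p} \geq c_3 j^{-1/d}$ with probability at least $3/4$.

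Combining via a union bound, on the intersection of the three good events (still of constant probability) one obtains $\lenX{p_j p} \geq c_3 j^{-1/d} \geq c_3 K^{1/d} i^{-1/d} > c_2 i^{-1/d} \geq \lenX{p_i p_j}$, provided $K$ is chosen with $c_3 K^{1/d} > c_2$. The main obstacle is the tree-NN lower bound: turning an expected tube-volume estimate into a high-probability lower bound relies on the independence of $p_j$ from $T'_{j-1}$, and the tube-volume estimate itself uses the deterministic inequality $\lenX{\ell'_k} \leq \lenX{\ell_k}$ to reduce the control of total tree length to the classical vertex-NN expectations rather than needing a separate analysis of tree-NN distances.
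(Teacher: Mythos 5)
Your proof follows essentially the same route as the paper's: both condition on the event that $j$ is much smaller than $i$ (the paper's notion of $p_j$ being ``old''), apply Markov to the $\Theta(i^{-1/d})$ nearest-neighbor expectation to upper bound $\lenX{p_ip_j}$, and use a Minkowski-sum tube-volume estimate on the $r$-neighborhood of $T'_{j-1}$ with $r$ on the order of $j^{-1/d}$ to lower bound $\lenX{p_jp}$. The presentation differs only in minor bookkeeping, with your version making the intersection of the good events and the choice of constants slightly more explicit, and treating the tube bound as an expectation rather than substituting the expected tree length directly.
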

\begin{proof}
    Continuing the use of notation from \secref{theoretical:analysis},
    we denote the output of \algref{build:geom:tree} on the input
    $\pth{P , \mathtt{vertexNN}(\cdot, \cdot)}$, by $T$, the trees
    created by the algorithm after $k$ iterations by $T_k$ and $T'_k$
    are , and the segments connecting $p_k$ to $T$ and $T'$ by $l_k$
    and $l'_k$ respectively.

    The point $p_j$ is \emph{old} with respect to $p_i$ if
    $j < c_{old}\cdot i$ for some constant $c_{old} < 1$ which will be
    given an exact value later. Since $i$ is chosen uniformly at
    random we have that $\Pr[p_j\text{ is old}]$ is constant (assuming
    a large enough value of $n$).

    Some straightforward but tedious calculations can be used to show
    that $\Ex{\lenX{l_k}} = \const_1\pth{\frac{1}{k}}^{1/d}$ for some
    constant $\const_1$, which, after some more calculations, implies
    $\lenX{T_{k}} = \sum_{l = 1}^k \const_1 \pth{(\frac{1}{k}} =
    \const_2\cdot k^{1/d}$. See
    \lemref{expected:length:of:greedy:tree} for full details. Note
    that since $T'_k$ is a superset of $P_{k-1}$ when connecting $p_i$
    to the tree, we have that $\lenX{\ell'_k} \leq \lenX{\ell_k}$ and
    $\lenX{T'_k} \leq \lenX{T_k}$ for every $k \in \IRX{n}$.

    Let $\BallX{r}$ denote the $d$-dimensional ball of radius $r$
    centered at the origin, $\VBallX{d}$ denote the measure of the
    $d$-dimensional unit ball (which is a constant depending on $d$),
    and let $W_k(r) = T_k \oplus \BallX{r}$ and
    $W'_k(r) = T'_k \oplus \BallX{r}$, where $\oplus$ denote the
    Minkowski sum of the two sets.

    In particular, for $r=\const_3/j^{1/d}$, we have that
    \begin{align*}
        \VolX{W_j(r)}
        \leq%
        j \VBallX{d} r^d
        + \lenX{T_j} \VBallX{d-1} r^{d-1}
        &\\%
        =%
        O\pth{ j r^d + j^{(1-1/d)} r^{d-1}}
        =&%
        O\pth{ \const_3^d  + \const_3^{d-1} }
        <
        \frac{1}{2},
    \end{align*}
    for a sufficiently small constant $\const_3$. Note that the second
    expression is simply the sum of measures of the $d$-dimensional
    balls centered at the tree vertices and the $d$-dimensional
    cylinders around the tree edges.

    By Markov's inequality we get that
    $\Prob{\lenX{\ell_i} > 4\Ex{\lenX{\ell_i}} =
       4\const_1\pth{\frac{1}{i}}^{1/d}}$ is constant, which means
    that with constant probability
    $\lenX{\ell_i} \leq 4\const_1\pth{\frac{1}{i}}^{1/d}$ and $p_j$ is
    old. If those two events occur then by setting $\const_{old}$ to
    be $\pth{\const_3/4\const_1 }^{d}$ we get


    \begin{align*}
      &\Prob{\lenX{\ell'_j} > \lenX{\ell_i}}%
      &\\%
      &\geq%
        \Prob{\lenX{\ell'_j} > \lenX{\ell_i} ~\bigg\rvert~
        \lenX{\ell_i}
        < 4\const_1\pth{\frac{1}{i}}^{1/d}}%
      &\\%
      &\geq%
        \Prob{\lenX{\ell'_j} > 4\const_1\pth{\frac{1}{i}}^{1/d}}%
        =%
        \Prob{p_j \notin W'_{j-1}\pth{4\const_1\pth{\frac{1}{i}}^{1/d}}}%
      &\\%
      &\geq%
        \Prob{p_j \notin W_{j-1}\pth{4\const_1
        \pth{\frac{\const_{old}}{j}}^{1/d}}}%
      \\%
        &=%
        \Prob{p_j \notin W_{j-1}\pth{\frac{\const_3}{j^{1/d}}}}%
        \geq 1/2.
    \end{align*}

    So, with some constant probability we have that
    $\lenX{p_jp} = \lenX{\ell'_j} > \lenX{\ell_i} = \lenX{p_ip_j} $.
\end{proof}

\begin{lemma}
    \lemlab{expected:length:of:greedy:tree}%
    Let $P=\{p_1,...,p_n\}$ be a set of points sampled uniformly at
    random from $B$, and $T$ be the output of \algref{build:geom:tree}
    on the input $\pth{P, \mathtt{vertexNN}(\cdot, \cdot)}$. Then,
    there are two constants $\const_1, \const_2$ (that depend only on
    $d$) such that
    $\const_1 n^{1-1/d} \leq \Ex{T_n} \leq \const_2 n^{1-1/d}$.
\end{lemma}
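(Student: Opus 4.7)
The plan is to reduce the expected total length to a sum of per-step expected nearest-neighbor distances, and then estimate each term using a standard ball-covering calculation.

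First I would observe that since $\mathtt{vertexNN}$ connects $p_i$ to its Euclidean nearest neighbor in $P_{i-1} = \{p_1, \ldots, p_{i-1}\}$, the edge $\ell_i$ added at step $i$ satisfies $\lenX{\ell_i} = \min_{j<i} \dY{p_i}{p_j}$. By linearity of expectation,
\begin{equation*}
   \Ex{\lenX{T_n}} \;=\; \sum_{i=2}^{n} \Ex{\min_{j<i} \dY{p_i}{p_j}},
\end{equation*}
so the statement reduces to showing that $\Ex{\min_{j<i} \dY{p_i}{p_j}} = \Theta\pth{i^{-1/d}}$, with constants depending only on $d$ and the fixed box $B$.

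Both the upper and lower bounds on this per-step expectation would come from the tail-integral identity $\Ex{X} = \int_{0}^{\infty} \Prob{X > r}\,\dX{r}$. Conditioning on $p_i$, the event $\min_{j<i} \dY{p_i}{p_j} > r$ is exactly that none of $p_1, \ldots, p_{i-1}$ lies in $B(p_i, r) \cap B$, which has probability $\pth{1 - \mu(B(p_i, r) \cap B)/\mu(B)}^{i-1}$. For the upper bound, I would note that even at a corner of $B$ the intersection $B(p_i, r) \cap B$ retains at least a $1/2^d$ fraction of $\VBallX{d}\, r^d$, so there is a constant $c_d > 0$ with $\mu(B(p_i, r) \cap B) \geq c_d r^d$ for all $p_i \in B$ and all $r$ below the diameter of $B$; this yields a tail at most $\exp\pth{-c_d (i-1) r^d / \mu(B)}$, which integrates to $O\pth{i^{-1/d}}$. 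For the lower bound, the trivial estimate $\mu(B(p_i, r) \cap B) \leq \VBallX{d}\, r^d$ gives a tail at least $\pth{1 - \VBallX{d}\, r^d / \mu(B)}^{i-1}$, which stays above a positive constant on an interval $r \in [0,\alpha i^{-1/d}]$ for a small enough $\alpha$, yielding the matching $\Omega\pth{i^{-1/d}}$ bound.

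Summing the two-sided bounds over $i \in \IRX{n}$, I would conclude with the elementary estimate $\sum_{i=1}^{n} i^{-1/d} = \Theta\pth{n^{1-1/d}}$ for $d \geq 2$, obtained from the integral comparison $\int_{1}^{n} x^{-1/d}\,\dX{x} = \tfrac{d}{d-1}\pth{n^{1-1/d} - 1}$. The main technical nuisance is the boundary argument controlling $\mu(B(p_i, r) \cap B)$ from below uniformly over $p_i \in B$; this is standard, but it is what must be treated carefully to keep both constants $\const_1, \const_2$ dependent only on $d$.
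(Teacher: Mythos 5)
Your proposal is correct and follows the same overall strategy as the paper's proof: reduce $\Ex{\lenX{T_n}}$ by linearity to the per-step nearest-neighbor distances, show each satisfies $\Ex{\lenX{\ell_i}} = \Theta\pth{i^{-1/d}}$, and sum. The only technical differences are that you work with Euclidean balls and the continuous tail-integral identity, whereas the paper uses cubes $C_i$ of volume $1/(i-1)$ and bounds the upper tail by a discrete sum over dilated cubes $t\cdot C_i$; and you are explicit about the boundary issue of $B(p_i,r)\cap B$ losing up to a $2^{-d}$ fraction near $\partial B$, a point the paper's upper-bound argument tacitly glosses over (its $\alpha_t$ estimate assumes $t\cdot C_i \subseteq B$), though that gap only affects the dimensional constant and is easily patched in the same way you indicate.
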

\begin{proof}
    We continue with the notation $\ell_i = p_i\nn_{P_{i-1}}(p_i)$,
    and prove that $\Ex{\ell_i} =\Theta(1/i^{1/d})$.  To this end,
    consider the cube $C_i$ of volume $1/(i-1)$ centered at $p_i$, and
    observe that the probability that all the points of $P_{i-1}$ are
    not in this cube is
    $(1-1/(i-1))^{i-1} \in ( \tfrac{i-2}{i-1}/e, 1/e ) \approx 1/e$.
    So, with constant probability $\ell_i \geq \sqrt[d]{1/(i-1)}/2$,
    which implies $\Ex{\ell_i} =\Omega(1/i^{1/d})$. As for the upper
    bound, observe that the probability that all the points of
    $P_{i-1}$ avoid the cube $t\cdot C_i$, is at most
    \begin{math}
        \alpha_t = \pth{1 - \frac{t^d}{i-1}}^{i-1} \leq \exp\pth{ - t^d }.
    \end{math}
    Thus

    \begin{align*}
      \Ex{\ell_i }
      \leq%
      \sum_{t=0}^\infty
      \Prob{P_{i-1} \cap (t+1) C_i   \neq \emptyset
      \text{ and }
      P_{i-1} \cap t C_i   \neq \emptyset}
      \\
      \cdot\mathrm{diam}\pth{(t+1)C_i}
      \\
      \leq%
      \sum_{t=0}^\infty
      \Prob{P_{i-1} \cap t C_i = \emptyset}
      (1+t)\mathrm{diam}\pth{C_i}\qquad\quad
      \\
      \leq%
      \sum_{t=0}^\infty \alpha_t (1+t)\mathrm{diam}\pth{C_i}%
      \qquad\qquad\qquad\qquad\quad
      \\
      \leq%
      \sum_{t=0}^\infty  (1+t) \exp(-t^d)
      \frac{\sqrt{d}}{\sqrt[d]{i-1}}
      =%
      O\pth{\frac{1}{\sqrt[d]{i-1}} }.
    \end{align*}
\end{proof}




\end{document}